\theoremstyle{plain}
\newtheorem{theorem}{Theorem}[section]
\newtheorem{proposition}[theorem]{Proposition}
\newtheorem{corollary}[theorem]{Corollary}
\theoremstyle{definition}
\newtheorem{definition}[theorem]{Definition}
\theoremstyle{remark}
\newcommand{\first}[1]{\textcolor{red}{#1}}
\newcommand{\second}[1]{\textcolor{violet}{#1}}
\newcommand{\third}[1]{\textcolor{orange}{#1}}
\newcommand{\cmark}{\ding{51}}%
\newcommand{\xmark}{\ding{55}}%
\begin{document}

\twocolumn[
\icmltitle{GraphMinNet: Learning Dependencies in Graphs with Light Complexity Minimal Architecture}



\begin{icmlauthorlist}
\icmlauthor{Md Atik Ahamed}{a}
\icmlauthor{Andrew Cheng}{b}
\icmlauthor{Qiang Ye}{c}
\icmlauthor{Qiang Cheng}{a,d}

\end{icmlauthorlist}

\icmlaffiliation{a}{Department of Computer Science, University of Kentucky, Lexington, KY 40506, USA}
\icmlaffiliation{b}{Department of Statistics and Applied Probability, University of California, Santa Barbara, CA 93106}
\icmlaffiliation{c}{Department of Mathematics, University of Kentucky, Lexington, KY 40506, USA}
\icmlaffiliation{d}{Institute for Biomedical Informatics, University of Kentucky, Lexington, KY 40506, USA}

\icmlcorrespondingauthor{Qiang Cheng}{qiang.cheng@uky.edu}

\icmlkeywords{Machine Learning, ICML, Graph, Convolution, RNN, minGRU}

\vskip 0.3in
]



\printAffiliationsAndNotice{}  

\begin{abstract}
Graph Neural Networks (GNNs) have demonstrated remarkable success in various applications, yet they often struggle to capture long-range dependencies (LRD) effectively. This paper introduces GraphMinNet, a novel GNN architecture that generalizes the idea of minimal Gated Recurrent Units to graph-structured data. Our approach achieves efficient LRD modeling with linear computational complexity while maintaining permutation equivariance and stability. The model incorporates both structural and positional information through a unique combination of feature and positional encodings, leading to provably stronger expressiveness than the 1-WL test. Theoretical analysis establishes that GraphMinNet maintains non-decaying gradients over long distances, ensuring effective long-range information propagation. Extensive experiments on ten diverse datasets, including molecular graphs, image graphs, and synthetic networks, demonstrate that GraphMinNet achieves state-of-the-art performance while being computationally efficient. Our results show superior performance on 6 out of 10 datasets and competitive results on the others, validating the effectiveness of our approach in capturing both local and global graph structures.
\end{abstract}

\section{Introduction}
Graphs are widely used in various fields ranging from social networks and knowledge representations
to engineering. Graph neural networks (GNNs) provide crucial techniques for extracting information and
making inference over graph data. While numerous GNN models have been developed, important
challenges still need to be overcome.

A fundamental challenge in GNNs is capturing long-range dependencies (LRDs) - the ability to model relationships between nodes that are far apart in the graph structure. Classical GNNs typically use message passing between neighboring nodes, where each layer only allows information to travel one hop. To reach distant nodes, information must pass through many layers, leading to a phenomenon called over-squashing: as information propagates through multiple layers, messages from many source nodes are compressed into fixed-size node representations, causing excessive compression and loss of important details. This over-squashing problem, particularly severe in deep GNNs or highly connected graphs, creates information bottlenecks that prevent effective modeling of long-range relationships.

The ability to capture LRD is critical because many real-world graphs inherently contain important long-distance relationships and require understanding of global graph structure. Beyond over-squashing, attempting to capture these dependencies through deep message passing leads to additional challenges such as over-smoothing (where node features become too similar), gradient vanishing, and information dilution. Thus, while GNN performance often depends on capturing both local and distant graph interactions, existing approaches struggle with this fundamental tension.

Several approaches have been proposed to address these challenges, including attention mechanisms over longer paths \cite{ying2021transformers, kreuzer2021rethinking, rampavsek2022recipe}, global graph features \cite{zhang2021nested, you2021identity}, skip connections \cite{wu2022stabilizing}, graph diffusion \cite{chamberlain2021grand}, and multi-scale architectures \cite{ying2018hierarchical}. While these approaches show promise, attention-based and multi-scale methods often face computational scalability issues with large graphs, whereas simpler approaches like skip connections and global features can be prone to overfitting on complex graph structures.

To address these limitations, we propose a novel approach that achieves effective LRD modeling with linear computational complexity. Our key insight comes from recursive neural networks (RNNs), particularly an emerging variant called minGRU \cite{feng2024were} that has demonstrated remarkable ability to capture long-range dependencies in sequential data with linear complexity.

However, introducing the idea of minGRU to graph data presents fundamental challenges due to the inherent differences between sequential and graph structures. Unlike sequential data where elements have natural ordering and positional information, graph nodes lack intrinsic ordering. Moreover, graphs contain explicit structural information through edges that is absent in sequential data. To address these differences, we develop GraphMinNet, which bridges these differences by generalizing minGRU's efficient mechanisms to the graph domain while preserving its feature learning advantages and incorporating graph-specific structural information.

In summary, this paper has the following contributions: 
\begin{itemize}
\item We generalize the idea of minimal GRU, an RNN variant, to graph-structured data by developing an efficient algorithm that integrates node features with positional encoding;
\item The resulting model, GraphMinNet, has key advantages including strong ability to capture LRD, expressivity between 1-WL and 3-WL in terms of graph discrimination power, and linear computational complexity, all with provable performance guarantees;
\item Through extensive experiments across diverse datasets, we demonstrate that our algorithm has superior predictive accuracy and efficiency compared to state-of-the-art baselines. 
\end{itemize}

\section{Related Works}
\begin{figure*}[h]
    \centering
    \includegraphics[width=0.9\linewidth]{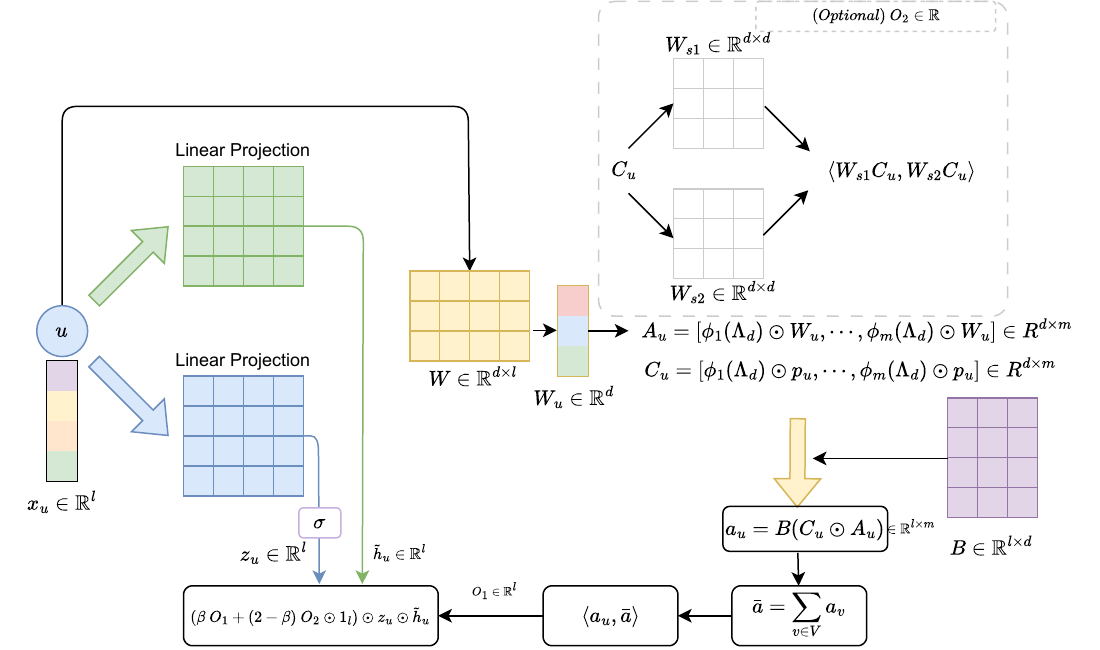}
    \caption{Schematic diagram of our proposed method. Here $u\in\mathbb{R}^{l}$ represents each node. $\sigma$ denotes the sigmoid activation function. $\beta$ represents a learnable parameter to focus on the specific part.}
    \label{fig:method}
\end{figure*}
We categorize the existing GNNs for prediction or classification tasks into several groups by their main network architectures. Notably, more recent models often combine elements from multiple categories.

{\bf{GCN or message passing based}}. These methods leverage either spatial or spectral domain operations through message passing or graph convolutions. Key approaches include Graph Convolutional Networks (GCN) \cite{kipf2016semi}, Gate GCN \cite{bresson2017residual}, Graph Isomorphism Networks (GIN) \cite{xu2018powerful}, Graph Attention Networks (GAT) \cite{velivckovic2018graph}, GraphSAGE \cite{hamilton2017inductive}, and Principal Neighborhood Aggregation (PNA) \cite{corso2020principal}. While efficient and scalable, these models typically have limited ability to capture LRD. Moreover, standard GCNs have limited expressiveness, as they are equivalent to the 1-Weisfeiler-Lehman (WL) test, whereas higher-order k-GNNs are proven to be more expressive \cite{morris2019weisfeiler}.

{\bf{Kernel based}}. Graph kernel methods have been extensively studied, including neural tangent kernel \cite{jacot2018neural}, graph neural tangent kernel \cite{du2019graph},
graph kernels with neural networks \cite{morris2019weisfeiler}, and spectral kernel learning \cite{zhi2023gaussian}. These methods offer theoretical guarantees through strong mathematical foundations from kernel theory, particularly in terms of provable convergence properties.
However, they face several challenges in adaptability to hierarchical structures, capturing complex patterns, and incorporating node/edge features. Thus, they may have limited representation power. 

{\bf{Transformer or SSM based}}. These recent models leverage Transformers or State Space Models (SSMs) to capture LRD. Following the success of Transformers in text and image domains, they have been adapted for graph learning. Key approaches include Graphormer \cite{ying2021transformers}, SAN \cite{kreuzer2021rethinking}, GraphGPS \cite{rampavsek2022recipe}, Exphormer \cite{shirzad2023exphormer}, Grit \cite{ma2023graph}, and Specformer \cite{bo2022specformer}. With the emergence of SSMs such as Mamba \cite{gu2023mamba}, new graph models like Graph-Mamba \cite{wang2024state} have been developed. While these models effectively capture LRD between distant nodes, Transformer-based models typically have quadratic complexity and are computationally demanding, whereas SSM-based models may not fully preserve the permutation equivariance of graph data \cite{zhang2024expressive}.

{\bf{Position or structural aware}}. Various techniques incorporate position or substructure information in graphs, from early approaches like DeepWalk \cite{perozzi2014deepwalk} and node2vec \cite{grover2016node2vec}, to recent work in position encoding \cite{you2019position}, distance encoding \cite{li2020distance}, structural RNN \cite{jain2016structural}, and SPE \cite{huang2024stability}. Recent models have explored substructural information through nested structures (NGNN \cite{zhang2021nested}), identity-aware patterns (ID-GNN \cite{you2021identity}), augmented kernels (GIN-AK+ \cite{zhao2021stars}), iterative learning (I2-GNN \cite{huang2022boosting}), and sign patterns (SUN \cite{frasca2022understanding}, SignNet \cite{lim2022sign}). While these techniques effectively capture higher-order interactions with provable expressiveness bounds, they face scalability challenges, e.g., due to expensive distance computations and requirements for full graph structure access. They may also suffer from generalization issues, including overfitting to specific structures and sensitivity to graph variations.

{\bf{Implicit and continuous-depth architecture}}. Continuous-depth architectures have emerged as a promising direction for graph learning, starting with GDE \cite{poli2019graph} which models network dynamics as a continuous-time system. GRAND \cite{chamberlain2021grand} developed graph neural diffusion, while Continuous GNNs \cite{xhonneux2020continuous} provided a framework for continuous-depth networks, and GRED \cite{ding2024recurrent} enhanced this with recurrent and dilated memory. These approaches offer memory efficiency and natural handling of dynamical processes, but face challenges in solving differential equations and require careful tuning to balance stability and expressiveness.

\section{Methods}

\label{sec:method}
\subsection{Preliminary}
GRU has been improved to minGRU \cite{feng2024were} to overcome gradient vanishing/explosion and enable better capture of global dependencies. For an input sequence of tokens, minGRU is defined as:
\begin{equation}
\begin{split}
h_t &= (1-z_t) \odot h_{t-1} + z_t \odot \tilde{h}_{t} \\
z_t &= \sigma(\text{Linear}{d_h} (x_t))\\
\tilde{h}_{t} &= \text{Linear}{d_h} (x_t)),
\end{split}
\end{equation}
where $x_t \in \mathbb{R}^{d_x}$ is the input feature vector and $h_t \in \mathbb{R}^{d_h}$ is its corresponding state-space representation at time $t$. Here, $\sigma(\cdot)$ is an element-wise non-linear activation function with values in $(0, 1)$, and $\text{Linear}_{d_h}(x_t)$ projects $x_t$ to a $d_h$-dimensional state space via an MLP.

The model achieves computational efficiency with complexity $O(2d_xd_h)$ \cite{feng2024were}, significantly lower than the original GRU's $O(3d_h(d_x + d_h))$. Training can be parallelized using a parallel scan algorithm \cite{feng2024were}. To enable more effective feature extraction, minGRU employs state expansion where the state dimension $d_h = \alpha d_x$ with $\alpha \ge 1$.

\subsection{GraphMinNet for Graph Learning} 

The minGRU model enhances the original GRU with several key advantages: 1) ability to capture global dependency, 2) linear efficiency in terms of input sequence length, 3) scalable model size with respect to input length, and 4) shift equivariance. 
Moreover, compared to state-space models \cite{gu2020hippo} \cite{gu2021combining}, in particular, Mamba \cite{gu2023mamba} that has a linear complexity and scalability, minGRU does not have a fixed state-space length and thus have a stronger ability to possess context awareness or content selectivity \cite{feng2024were}. These advantages offer potential solutions to common GNN challenges, motivating our development of GraphMinNet.

To develop this model, first we obtain the explicit expression of the minGRU model containing no state variable $h_t$: 
\begin{align}
   \nonumber h_t = &z_t  \odot \tilde{h}_{t} + (1-z_t) \odot h_{t-1}  \\
    \nonumber = &z_t  \odot \tilde{h}_{t}  
    + (1-z_t) \odot   z_{t-1}  \odot \tilde{h}_{t-1} \\
   \nonumber & + (1-z_t) (1-z_{t-1})  \odot  {h}_{t-2} = \cdots \\
    =  &\sum_{i=1}^t \prod_{j=i+1}^{t} (1-z_j) \odot z_i \odot \tilde{h}_i.
    \label{eq_minGRU_expand}
\end{align}
In the last equality of Eq.\ (\ref{eq_minGRU_expand}) and hereafter,  the notation $\prod$ stands for element-wise multiplication of $d_h$-dimensional vectors across relevant indexes. By denoting $c_i = \prod_{j=1}^i (1-z_j)$, $i=1, \cdots, t$,  Eq. (\ref{eq_minGRU_expand}) can be written as 
\begin{equation}
h_t = (\sum_{i=1}^{t} z_i \odot \tilde{h}_i \odot c_i^{-1}) \odot c_t.
\label{eq_expanded}
\end{equation}
The above equation can facilitate us to derive a corresponding model for graph learning, as it does not contain any intermediate, latent state variables. 

For a graph $G = (V, A)$ with $n$ nodes in set $V$ and adjacency matrix $A$, each node $u$ has a feature vector $x_u \in \mathcal{R}^l$. To introduce the idea of minGRU to graph-structured data, we make three key observations:

1) Position Indexing: While minGRU uses sequence positions as indices, we associate these with graph nodes. Since graphs lack natural ordering, we replace $\sum_{i \le t}$ with $\sum_{v \in V}$ to ensure permutation equivariance.

2) Positional Information: To capture node positions in the graph structure, we employ Laplacian positional embedding (LPE) \cite{wang2022equivariant, huang2024can, huang2023stability}. Given the graph Laplacian eigendecomposition $L = \tilde{V} \Lambda \tilde{V}^T$, where $\tilde{V} \in {\mathcal{R}^{n \times n}}$ contains the eigenvectors and $\Lambda$ is the diagonal eigenvalue matrix, we define the $d$-dimensional LPE for node $u$ as $p_u = \tilde{V}[u, 1:d] \in {\mathcal{R}^d}$. This encoding captures the absolute position of node $u$ in the graph. We denote by $\Lambda_d$ the vector of top $d$ non-zero eigenvalues.

3) Content Dependence: The interaction terms $c_i^{-1} \odot c_t$ depend on $z_j$ (for $i+1 \le j \le t$), which in turn depend on input features $x_j$, creating content-dependent state variables. This mechanism parallels the selection mechanism in Mamba \cite{gu2023mamba, ahamed2024timemachine}. To fully capture both structural and feature information, we encode positions and features separately in matrix form. Therefore, we need to replace the $\odot$ operation involving the matrices with suitable operations that produce vectors.

Based on these observations, we construct the node embedding as follows. First, we define the feature component $A_u \in \mathcal{R}^{d \times m}$ for node $u$:
\begin{align}
A_u = [\phi_1(\Lambda_d) \odot (W x_u), \cdots, \phi_{m}(\Lambda_d) \odot (W x_u)],
\label{eq_Au}
\end{align}
where $W \in \mathcal{R}^{d \times l}$ is a learnable weight matrix and $\phi_i(\cdot): \mathcal{R}^d \rightarrow \mathcal{R}^d$ are learnable permutation equivariant functions, $i=1, \cdots, m$. Here, $\phi_i(\Lambda_d)$ are permutation equivariant to the top-$d$ eigenvalues of the Laplacian, similar to the global permutation-equivariant set aggregation functions in \cite{wang2022equivariant, huang2024can}.

Similarly, we construct the positional component $C_u \in \mathcal{R}^{d \times m}$:
\begin{align}
C_u = [\phi_1(\Lambda_d) \odot p_u, \cdots, \phi_{m}(\Lambda_d) \odot p_u].
\label{eq_Cu}
\end{align}
The overall embedding for node $u$ combines these components:
\begin{align}
    a_u = B( A_u \oplus_1 C_u),
    \label{eq:au}
\end{align}
where $B \in \mathcal{R}^{l \times d}$ is a learnable matrix and $\oplus_1$ denotes element-wise aggregation (e.g., addition or multiplication) between matrices $A_u$ and $C_u$. The resulting embedding $a_u$ has size $l \times m$.

The inverse operation in minGRU is adapted to graphs by associating node encodings with quantities in Eq. (\ref{eq_minGRU_expand}) and Eq. (\ref{eq_expanded}):
$a_v \leftarrow c_j^{-1} \odot z_j \odot \tilde{h}_j$ for any node $v \in V$ and $a_v \odot z_v \odot \tilde{h}_v \leftarrow c_j$, where $\odot$ between $a_v$ and $z_v$ denotes element-wise multiplication of each column of $a_v$ with $z_v$.

Inspired by Eq. (\ref{eq_expanded}), we formulate GraphMinNet for node $u$ as: 
\begin{align}
\label{eq_graph_expanded}
    h_u &= \sum_{v \in V}  a_v \odot (a_u  \odot z_u \odot \tilde{h}_u) \\
    z_u &= \sigma(Linear (x_u))\\
    \label{eq_tilde_h}
    \tilde{h}_u &=  Linear (x_u).
\end{align}
Here, the state variable $h_u$ has the same dimension $l$ as node feature $x_u$. Our formulation differs from the graph convolution in \cite{huang2024can} in two key aspects: First, nonlinear feature dependence where $h_u$ depends nonlinearly on $x_u$ through $\tilde{h}_u$ (linear transformation), $z_u$ (gated feature attention), and $A_u$ (in $a_u$, containing $x_u$ in each column), with the gated feature attention providing automatic focus on important features. Second, while \cite{huang2024can} primarily emphasizes positional encoding, our formulation incorporates features through $A_u$, $z_u$, and $\tilde{h}_u$.

For matrix $a_v$, the $\odot$ operation with vector $z_v \odot \tilde{h}_v$ multiplies the vector element-wise with each matrix column. Defining $\bar{a} = \sum_{v \in V} a_v$, we can reformulate Eq. (\ref{eq_graph_expanded}) as:
\begin{align}
    h_u =   \langle a_u, \bar{a} \rangle \odot z_u \odot \tilde{h}_u. 
    \label{eq_reformulation_inner-product}
\end{align}
Here, we generalize the $\odot$ operation between $a_u$ and $\bar{a}$ by an inner product operation $\langle \cdot, \cdot \rangle$ 
because $a_u$ and $\bar{a}$ are matrices. As $\bar{a} = \sum_{v \in V} a_v$, we define $\langle a_u, a_v\rangle$ using four types of inner products:

{\emph{Type 1}}.  Elementary inner product between corresponding matrix rows;

{\emph{Type 2}}.  For $m=l$, elementary inner product between corresponding columns, followed by transposition into a column-vector; 

{\emph{Type 3}}.  Inner product of vectorized matrices, with result multiplied by $l$-dimensional all-ones vector;

{\emph{Type 4}}.  Either $\langle BC_u, BC_v\rangle \langle A_u, A_v\rangle$ or $\langle BA_u, BA_v\rangle$ $ \langle C_u, C_v\rangle$, where the first inner product uses any of Types 1-3 and the second uses Type 2 or 3.

Note that the Einstein summation operation (einsum) in \texttt{torch} can be used to efficiently calculate these different types of products. In the model, all nodes $v$ share the same weight matrices $B$ (in $a_v$) and $W$ (in $A_v$). Because emphasizing on node $u$ potentially facilitates capturing its useful information, 
we may further consider the following formulation:
\begin{align}
\nonumber     h_u =  & z_u \odot \tilde{h}_u  \odot (\beta \langle a_u, \bar{a}\rangle \\
&+ (2-\beta) \langle W_{s1} C_u, W_{s2} C_u \rangle 1_{l}),
\label{eq_GraphMinNet_with_self}
\end{align}
where $\beta \in [0, 2]$ is a weighting parameter, $W_{s1}, W_{s2} \in \mathcal{R}^{d \times d}$ are learnable matrices, and $1_{l} \in \mathcal{R}^l$ is an all-ones vector.
The inner product $\langle \cdot, \cdot \rangle$ in the second term uses Type 3 (matrix vectorization).
This formulation generalizes Eq. (\ref{eq_graph_expanded}), which can be recovered as a special case when $\beta = 1$, showing how the self-loop term integrates with the original structure.

\subsection{Properties of GraphMinNet}
We present several key properties of our GraphMinNet formulation. Detailed proofs are provided in the appendix. 
\begin{definition} [Permutation Equivariance]
For a graph $G$ with node features $X$ and adjacency matrix $A$, given a permutation matrix $Q$, a GNN function $f$ is permutation equivariant if and only if $f(QX, QAQ^T) = Qf(X, A)$.
\end{definition}
This property ensures that when nodes of an input graph are reordered (permuted), the node-level outputs transform according to the same permutation. Since node ordering in graphs is arbitrary, permutation equivariance provides an essential inductive bias that enables the model to learn representations independent of node indexing.

\begin{definition} [(Lipschitz) Stability of a GNN]
For a GNN $f$, input graphs $G$ and $G'$, and distance metrics $d_i$ and $d_o$ for graphs and outputs respectively, if $d_i(G, G') \le \epsilon$, then $d_o(f(G), f(G')) \le L\epsilon$, where $L$ is the Lipschitz constant of the network.
\end{definition}
Stability encompasses three aspects: Structural stability refers to how outputs change when edges are added/removed. Thus, it is about what the output response is to changes in graph connectivity.   Feature stability refers to how outputs change when node/edge features are perturbed, thus is about sensitivity to noise in feature values.  Spectral stability refers to how changes in the graph's eigenvalues affect the output, which is particularly important for spectral-based GNN approaches. 
As shown in \cite{huang2023stability}, stability is more general than equivariance \cite{huang2023stability}  and implies generalization ability \cite{bousquet2002stability} \cite{shalev2010learnability}. Therefore, the stability property is critical to the GNN algorithm's robustness to noise in real-world data, generalization abilities, and adversarial robustness. 
 
\begin{proposition}
The formulation of GraphMinNet in Equations (\ref{eq_graph_expanded}) (or (\ref{eq_GraphMinNet_with_self})) 
through (\ref{eq_tilde_h}) is permutation equivariant. Moreover, if the functions $\phi_i(\cdot)$ used in forming $A_u$ and $C_u$ (Equations (\ref{eq_Au}) and (\ref{eq_Cu})) are Lipschitz, then GraphMinNet is Lipschitz stable with respect to both feature vectors and eigenvalues.
\label{pro_equivariance}
\end{proposition}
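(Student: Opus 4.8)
The plan is to prove permutation equivariance and Lipschitz stability as two separate arguments, since they rely on different structural features of the construction.

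For permutation equivariance, I would track how each ingredient transforms under a permutation matrix $Q$. Under relabeling, the node features become $QX$ and the Laplacian becomes $QLQ^T$, so its eigendecomposition gives eigenvectors $Q\tilde{V}$ with the \emph{same} eigenvalue matrix $\Lambda$. The key observation is that the positional encoding $p_u$ for node $u$ moves to the position indexed by $Qu$, while $\Lambda_d$ (hence every $\phi_i(\Lambda_d)$) is unchanged because eigenvalues are a set-valued, permutation-invariant quantity. I would then verify that $A_u$ and $C_u$ in Equations (\ref{eq_Au}) and (\ref{eq_Cu}) are simply carried to the permuted index, so $a_u$ is as well. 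The crucial step is that $\bar{a} = \sum_{v \in V} a_v$ is a sum over all nodes and is therefore \emph{permutation invariant}: reordering the summands leaves $\bar{a}$ unchanged. Substituting into Equation (\ref{eq_reformulation_inner-product}), each $h_u$ depends only on $a_u$, $x_u$ (through $z_u,\tilde{h}_u$), and the invariant $\bar{a}$, so the output vector permutes exactly as $Qf(X,A)$. The same argument applies to Equation (\ref{eq_GraphMinNet_with_self}) since its extra self-term involves only node-$u$ quantities. One technical caveat I would flag is eigenvalue multiplicity / sign ambiguity of eigenvectors; I would note that the $\phi_i$ are applied to eigenvalues (not raw eigenvectors) and cite the permutation-equivariant set-aggregation framework of \cite{wang2022equivariant, huang2024can} to handle the subspace ambiguity cleanly.

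For Lipschitz stability, the strategy is to bound the output perturbation by composing Lipschitz bounds through the computational graph. I would assume bounded inputs (features and eigenvalues lying in a compact region, which makes the bilinear map $\langle a_u,\bar{a}\rangle$ locally Lipschitz) and treat two cases: perturbation in the feature vectors $X$ and perturbation in the eigenvalues $\Lambda_d$. For feature perturbations, $z_u = \sigma(\mathrm{Linear}(x_u))$ is Lipschitz because $\sigma$ is $1/4$-Lipschitz and the linear map has bounded operator norm; $\tilde{h}_u$ is linear hence Lipschitz; and $A_u$ depends linearly on $x_u$. Since $a_u$ and $\bar{a}$ are then Lipschitz in $X$, and the inner product of bounded factors is bilinear (hence locally Lipschitz), the product $\langle a_u,\bar{a}\rangle \odot z_u \odot \tilde{h}_u$ is Lipschitz by the product rule for Lipschitz functions on bounded sets. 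For eigenvalue perturbations, I would use the hypothesis that each $\phi_i$ is Lipschitz, so $\phi_i(\Lambda_d)$ is Lipschitz in $\Lambda_d$, and propagate this through $A_u$, $C_u$, $a_u$, $\bar{a}$, and finally the inner product as before.

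The main obstacle I anticipate is the stability argument rather than equivariance: because the output is a \emph{product} of several terms each of which is only locally (not globally) Lipschitz, a global Lipschitz constant does not exist without some boundedness assumption. I would therefore make explicit a compactness/boundedness hypothesis on the feature vectors and the spectrum, and carefully assemble the constant $L$ as a product of the individual factors' bounds and Lipschitz moduli (operator norms of $W$, $B$, $W_{s1}$, $W_{s2}$, the $1/4$ bound on $\sigma$, and the Lipschitz constants of the $\phi_i$), using the standard fact that a product of bounded Lipschitz functions is Lipschitz. A secondary subtlety is that the $\bar{a}$ sum aggregates over all $n$ nodes, so perturbing one node's features changes $\bar{a}$ by a controlled amount, and I would confirm the constant's dependence on $n$ is benign (it enters only through the norm bound on $\bar{a}$, not multiplicatively per node in a way that blows up). Once boundedness is fixed, both claims reduce to routine bookkeeping, so I would keep the detailed constant-tracking in the appendix and present only the structural argument in the main text.
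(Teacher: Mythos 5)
Your proposal follows essentially the same route as the paper's proof: equivariance is established by tracking how $p_u$, $A_u$, $C_u$, and $a_u$ transform under a permutation $Q$ (with $\Lambda_d$ invariant), combined with the permutation invariance of the global sum $\bar{a} = \sum_{v \in V} a_v$; stability is established by propagating Lipschitz bounds through the computational graph. The one substantive difference is in the stability half, and it cuts both ways. The paper's proof simply invokes its Fact 3 (``composition of Lipschitz functions is Lipschitz''), which, as you correctly flag, does not literally apply: $h_u = \langle a_u, \bar{a}\rangle \odot z_u \odot \tilde{h}_u$ is a \emph{product} of terms each depending on $x_u$, with $\tilde{h}_u$ an unbounded linear function and $\langle a_u, \bar{a}\rangle$ also feature-dependent, so the map is only locally Lipschitz; your explicit compactness/boundedness hypothesis (and the accompanying constant-tracking through the operator norms of $W$, $B$, and the moduli of $\sigma$ and the $\phi_i$) is what makes the claim rigorous, whereas the paper leaves it implicit. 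Conversely, the paper goes one step beyond your argument on the spectral side: it considers a symmetric perturbation $E$ of the adjacency matrix and uses Weyl's inequality $|\lambda_i(A+E) - \lambda_i(A)| \le \|E\|_2$ to convert structural (edge-level) perturbations into eigenvalue perturbations before applying the Lipschitz hypothesis on $\phi_i$; you treat eigenvalue perturbations directly, which matches the literal statement of the proposition but omits this bridge to graph-structure perturbations. Neither proof tracks how perturbations affect the eigenvectors entering $C_u$, so your version concedes nothing there.
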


Many common permutation equivariant functions in neural networks are naturally Lipschitz, including linear permutation equivariant functions, element-wise Lipschitz operations, max/min pooling, and mean pooling. Thus, the Lipschitz condition is readily satisfied for typical choices of $\phi(\cdot)$, ensuring provable stability of GraphMinNet. 
As stability implies strong generalizability and robustness, we have the following result. 
\begin{corollary}
When the condition in Proposition \ref{pro_equivariance} is satisfied, GraphMinNet has provable generalization ability and robustness.
\end{corollary}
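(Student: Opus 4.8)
The plan is to derive the corollary as a direct consequence of the Lipschitz stability established in Proposition \ref{pro_equivariance}, combined with the classical principle that stability implies generalization \cite{bousquet2002stability, shalev2010learnability}. The two claimed conclusions, robustness and generalization, are handled separately: robustness is essentially immediate from the stability definition, whereas generalization requires the stability-to-generalization bridge and is the substantive part of the argument.

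First, for robustness, I would argue directly from the definition of Lipschitz stability. Proposition \ref{pro_equivariance} guarantees that, whenever the $\phi_i$ are Lipschitz, there is a finite constant $L$ (a product of the Lipschitz moduli of the $\phi_i$ together with the operator norms of the learnable matrices $W$, $B$, $W_{s1}$, $W_{s2}$ and of the linear maps producing $z_u$ and $\tilde{h}_u$) such that $d_o(f(G), f(G')) \le L \, d_i(G, G')$ for input graphs $G, G'$ differing in features or in Laplacian eigenvalues. Hence any bounded perturbation, adversarial or stochastic, of magnitude $\epsilon$ produces an output deviation of at most $L\epsilon$, which is precisely quantitative robustness; I would additionally note that permutation equivariance removes any sensitivity to node relabeling.

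Second, for generalization, the plan is to compose the network's Lipschitz stability with a Lipschitz (or bounded) loss $\ell$, so that the per-example loss inherits a stability modulus proportional to $L$, and then invoke the stability-implies-generalization theorems of \cite{bousquet2002stability} and the stability--learnability equivalence of \cite{shalev2010learnability}. Concretely, I would (i) restrict attention to the hypothesis class of GraphMinNets whose parameters satisfy norm constraints, so that $L$ is uniformly bounded over the class; (ii) translate the bounded stability modulus into a uniform-stability (or on-average-stability) bound for regularized empirical risk minimization; and (iii) conclude an $O(L/\sqrt{N})$-type bound on the expected generalization gap for $N$ training graphs.

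The main obstacle I anticipate is the conceptual gap between \emph{input-perturbation} Lipschitz stability, which is what Proposition \ref{pro_equivariance} delivers, and \emph{algorithmic} stability with respect to swapping a single training example, which is what the Bousquet--Elisseeff-type bounds require. Bridging these cleanly demands either assuming a regularized, stable training procedure and converting parameter-level Lipschitz control into a uniform-stability constant, or instead bounding the Rademacher complexity of the norm-constrained Lipschitz class directly and invoking uniform convergence. The most delicate quantitative step is keeping $L$ finite and the resulting bound non-vacuous: since $L$ aggregates the Lipschitz constants of all $\phi_i$ with the operator norms of the learnable weights, the generalization guarantee is only meaningful under explicit weight-norm (or spectral-norm) constraints, and I would state these assumptions up front rather than leave $L$ unconstrained.
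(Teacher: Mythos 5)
Your proposal follows essentially the same route as the paper, which in fact provides no standalone proof of this corollary: the paper's entire justification is the main-text remark that stability implies generalization ability and robustness, citing \cite{huang2023stability}, \cite{bousquet2002stability}, and \cite{shalev2010learnability}, so the corollary is treated as an immediate consequence of Proposition \ref{pro_equivariance} plus the cited literature. Your handling of robustness---reading it off directly from the Lipschitz bound $d_o(f(G), f(G')) \le L\, d_i(G, G')$, with $L$ assembled from the Lipschitz moduli of the $\phi_i$ and the operator norms of the learnable matrices---matches what the paper intends. Where you go beyond the paper is in the generalization half: you correctly identify that the Bousquet--Elisseeff-type bounds concern \emph{algorithmic} stability (sensitivity of the learned hypothesis to replacing one training example), not the \emph{input-perturbation} Lipschitz stability that Proposition \ref{pro_equivariance} actually delivers, and that bridging the two requires extra hypotheses---a regularized or provably stable training procedure, or uniform weight-norm constraints so that the norm-bounded Lipschitz class admits a Rademacher-complexity argument. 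The paper silently elides this distinction, letting the citations carry the load; your version makes the corollary honest at the cost of stating these added assumptions explicitly, which is the right trade-off. In short: same skeleton as the paper, but your proposal supplies the missing bridge that the paper's citation chain only gestures at.
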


The following property establishes GraphMinNet's ability to capture long-range dependencies, which is critical for effective graph learning.
\begin{proposition}[Long-range Dependency]
There exists $\phi(\cdot)$ such that the gradient norm $\| \frac{\partial h_u}{\partial x_v} \|$ of GraphMinNet does not
decay as $spd(u, v)$ grows (with $n$ tending to $\infty$), where $spd(u, v)$ is the shortest path distance between $u$ and $v$.
\label{pro_lrd}
\end{proposition}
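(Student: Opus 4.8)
The plan is to differentiate the closed form of Eq.~(\ref{eq_reformulation_inner-product}) directly and track \emph{where} the dependence on $x_v$ enters. For $v \neq u$ the factors $z_u$, $\tilde h_u$, and $a_u$ depend only on $x_u$, while the variable $x_v$ affects $h_u = \langle a_u, \bar a\rangle \odot z_u \odot \tilde h_u$ solely through the single summand $a_v$ of the global aggregate $\bar a = \sum_{w\in V} a_w$. By bilinearity of the inner product, $\frac{\partial}{\partial x_v}\langle a_u, \bar a\rangle = \langle a_u, \frac{\partial a_v}{\partial x_v}\rangle$, so entrywise $[\frac{\partial h_u}{\partial x_v}]$ factors as products of node-local quantities; for the Type~3 product it is the rank-one outer product $(z_u \odot \tilde h_u)\,\big(\frac{\partial}{\partial x_v}\langle a_u,\bar a\rangle\big)^{\top}$. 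The decisive observation is that this expression contains \emph{no factor depending on} $spd(u,v)$: because the aggregation $\bar a$ sums over all nodes symmetrically, every node contributes a direct term to $h_u$ regardless of its graph-theoretic distance. This is exactly the structural contrast with layered message passing, where $\frac{\partial h_u}{\partial x_v}$ vanishes once $spd(u,v)$ exceeds the depth and otherwise decays with distance.

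Next I would make the single contributing term explicit using Eqs.~(\ref{eq_Au})--(\ref{eq:au}). Taking $\oplus_1$ to be addition, $a_v = B A_v + B C_v$ with only $A_v$ depending on $x_v$; since column $i$ of $A_v$ equals $\phi_i(\Lambda_d)\odot(W x_v)$, the Jacobian $\frac{\partial a_v}{\partial x_v}$ is the \emph{fixed} linear map whose $i$-th block is $B\,\mathrm{diag}(\phi_i(\Lambda_d))\,W$, independent of $x_v$. Substituting yields a closed form for $\frac{\partial h_u}{\partial x_v}$ in terms of $a_u$, $z_u$, $\tilde h_u$, $W$, $B$, and $\Lambda_d$ only. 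I would then exhibit an explicit admissible choice of $\phi$ — for instance constant maps $\phi_i(\Lambda_d) = 1_d$ (or any fixed nonzero vectors), which are trivially Lipschitz and permutation equivariant, satisfying the hypotheses of Proposition~\ref{pro_equivariance} — so that the gradient reduces to an expression in $x_u$, $x_v$, $W$, and $B$ alone, manifestly independent of the spectrum. For generic $W, B$ and nonzero gated features $z_u \odot \tilde h_u$, this expression is nonzero, giving a strictly positive lower bound $\|\frac{\partial h_u}{\partial x_v}\| \geq \gamma > 0$ that is uniform in $spd(u,v)$.

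The main obstacle is controlling the $n\to\infty$ limit, since two spectral quantities vary with graph size: the positional encodings $p_u = \tilde V[u,1{:}d]$, whose entries typically scale like $1/\sqrt n$ and thus drive $C_u \to 0$, and the top eigenvalues $\Lambda_d$, which need not converge across a growing graph family. The resolution is that the lower bound is carried entirely by the feature component $A_u$ (built from $W x_u$, which is $n$-independent), not by $C_u$; with $\phi$ constant the $\Lambda_d$ dependence disappears as well, so the vanishing of $p_u$ affects only a term the bound does not use. I would therefore argue that $\|\frac{\partial h_u}{\partial x_v}\|$ tends to a positive constant fixed by $x_u, x_v, W, B$ as $n\to\infty$, rather than decaying with $spd(u,v)$. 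A secondary check concerns the self-emphasis variant Eq.~(\ref{eq_GraphMinNet_with_self}): its extra term depends only on $C_u$ and not on $x_v$, so it drops out of $\frac{\partial h_u}{\partial x_v}$ for $v\neq u$ and the same bound applies verbatim. Finally I would note that a parallel computation covers the remaining inner-product Types and the multiplicative choice of $\oplus_1$, so the non-decay property is robust to the particular instantiation.
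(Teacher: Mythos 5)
Your proof is correct, but it reaches the conclusion by a genuinely different route than the paper. You pick the additive aggregation $\oplus_1 = +$ in Eq.~\eqref{eq:au} together with the constant choice $\phi_i(\Lambda_d) = 1_d$, so that $\partial a_v/\partial x_v$ collapses to a fixed, position-free linear map built from $BW$, and the rank-one gradient $(z_u \odot \tilde h_u)\,\langle a_u, \partial a_v/\partial x_v\rangle$ carries no dependence whatsoever on where $v$ sits in the graph; non-decay is then immediate, and the $n\to\infty$ limit is safe because your bound rests on the $n$-independent feature component $A_u$ rather than on $p_u$. The paper instead keeps the graph structure inside the gradient: it takes $\oplus_1 = \odot$ with a Type 4 product from Eq.~\eqref{eq_reformulation_inner-product}, uses the spectral identity $\langle C_u, C_v\rangle = p_u^{\top}\mathrm{diag}(\phi_1(\Lambda))^2 p_v = e_u^{\top}\phi_1^2(L)\,e_v$, and chooses $\phi$ so that $\phi_1^2(L) = \sum_{k} b_k \tilde A^{k}$ with all $b_k>0$; the gradient is then proportional to the walk sum $\sum_k b_k(\tilde A^{k})_{u,v}$, which is strictly positive because the walk of length $spd(u,v)$ contributes positively and positivity of the $b_k$ rules out cancellation. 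The trade-off: your argument is more elementary and gives a bound literally uniform over pairs $(u,v)$, but the instantiation it exhibits is structure-oblivious --- every pair of nodes interacts identically, as in global sum pooling --- so it establishes the existence claim in a degenerate regime; the paper's instantiation shows non-decay for gradients that remain genuinely graph-dependent, which is closer to the intended meaning of capturing long-range dependencies. Two repairs you should make: (i) ``any fixed nonzero vectors'' is not admissible, since permutation equivariance $\phi(Qx)=Q\phi(x)$ for all permutations $Q$ forces a constant map to have all entries equal, so restrict to multiples of $1_d$; (ii) your closing claim that ``a parallel computation covers \dots the multiplicative choice of $\oplus_1$'' overreaches --- under $\oplus_1=\odot$ the Jacobian $\partial a_v/\partial x_v$ acquires $\mathrm{diag}(p_v)$ factors, position dependence returns, and handling that case is precisely the walk-counting work the paper's proof performs, not a routine variant of yours.
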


In this paper, we assume the eigenvalue decomposition of $A$ is pre-computed. For large sparse graphs, we adopt the Lanczos algorithm (e.g., implemented in ARPACK) to efficiently compute the $d$ largest/smallest eigenvalues and eigenvectors of the adjacency/Laplacian matrices. This computation has complexity $O(Md)$, where $M$ is the number of edges. Given these pre-computed eigenvalues and eigenvectors, the complexity and scalability of our algorithm is given as follows:
\begin{proposition}[Complexity and Scalability]
The hidden states $h_1, \cdots, h_n$ can be computed from $x_1, \cdots, x_n$ with a complexity of $O(nmdl)$, where $n$ is the number of nodes, $m$ is the number of columns in node encoding ($A_u$, $C_u$, or $c_u$),
$d$ is the dimension of rows in node encoding, and $l$ is the feature dimension.

Additionally, the GraphMinNet algorithm in Equations (\ref{eq_graph_expanded}) or (\ref{eq_GraphMinNet_with_self}) achieves linear scalability with respect to the number of nodes.
\label{pro_complexity}
\end{proposition}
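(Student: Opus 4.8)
The plan is to prove the bound by a direct operation count, organized so that the aggregate $\bar{a} = \sum_{v \in V} a_v$ is formed exactly once and then reused across all nodes; linear scalability in $n$ then follows by treating $m$, $d$, and $l$ as fixed hyperparameters. Throughout I would work under the stated assumption that the eigendecomposition is precomputed, so that $\Lambda_d$, the positional encodings $p_u$, and the node-independent vectors $\phi_i(\Lambda_d)$ (cached once for all $i = 1, \dots, m$) enter the recursion at no asymptotic cost.

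First I would account for the per-node construction cost. Forming $W x_u$ with $W \in \mathcal{R}^{d \times l}$ costs $O(dl)$; assembling the $m$ columns of $A_u$ and $C_u$ via the element-wise products $\phi_i(\Lambda_d) \odot (W x_u)$ and $\phi_i(\Lambda_d) \odot p_u$ costs $O(md)$ once $\phi_i(\Lambda_d)$ is cached; the aggregation $A_u \oplus_1 C_u$ is $O(md)$; and the product $a_u = B(A_u \oplus_1 C_u)$ with $B \in \mathcal{R}^{l \times d}$ is the dominant step at $O(ldm)$. The gate $z_u$ and candidate $\tilde{h}_u$ are single linear maps of $x_u$, contributing an $O(nl^2)$ term over all nodes that is itself linear in $n$. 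Summing over the $n$ nodes gives $O(nldm)$ for building all $a_u$ together with these auxiliary quantities.

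Next I would handle aggregation and the final contraction. The sum $\bar{a} = \sum_{v \in V} a_v$ of $n$ matrices of size $l \times m$ is computed once in $O(nlm)$. Given $\bar{a}$, each hidden state in Eq.\ \eqref{eq_reformulation_inner-product} requires a single inner product $\langle a_u, \bar{a}\rangle$ of two $l \times m$ matrices followed by two element-wise products with $z_u$ and $\tilde{h}_u$; I would verify that each of the inner-product Types 1--3 costs $O(lm)$, and that Type 4, a product of two such contractions with the factors $B C_u$ and $B A_u$ precomputable per node, also reduces to $O(lm)$ up to constants. The self-loop term $\langle W_{s1} C_u, W_{s2} C_u\rangle 1_l$ in Eq.\ \eqref{eq_GraphMinNet_with_self} is purely local, dominated by the two products $W_{s1} C_u, W_{s2} C_u$ at $O(d^2 m)$ per node. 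Hence each $h_u$ costs $O(lm + d^2 m)$, and the whole forward pass is dominated by the $O(nldm)$ construction cost, yielding the claimed $O(nmdl)$.

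Finally, linear scalability is immediate: with $m$, $d$, $l$ held fixed, every contributing term is at most $O(n)$, so the total is $O(n)$. The conceptual heart of the argument — and the step I would stress as the one that forecloses a quadratic blow-up — is the factorization in Eq.\ \eqref{eq_reformulation_inner-product}: the naive double sum of Eq.\ \eqref{eq_graph_expanded} pairs every $u$ with every $v$ and would cost $O(n^2)$, but because $a_u \odot z_u \odot \tilde{h}_u$ is independent of $v$, the $v$-sum collapses into the single shared aggregate $\bar{a}$. The main obstacle I anticipate is therefore not the routine arithmetic but the bookkeeping required to confirm that all four inner-product Types and the self-loop term precompute cleanly and each remain at $O(lm)$ (respectively $O(d^2 m)$), so that no hidden $n^2$ dependence re-enters through the particular choice of inner product.
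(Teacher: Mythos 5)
Your proposal is correct and follows essentially the same route as the paper's proof: a direct per-node operation count (with the eigendecomposition and the vectors $\phi_i(\Lambda_d)$ treated as precomputed), dominated by the $O(ldm)$ cost of forming each $a_u$, combined with the key observation that $\bar{a} = \sum_{v \in V} a_v$ is aggregated once and reused per node, so the apparent double sum collapses to work linear in $n$. One small quibble: for Eq.~(\ref{eq_GraphMinNet_with_self}) your self-loop count of $O(d^2 m)$ per node is \emph{not} dominated by $O(ldm)$ when $d > l$ (the paper's own proof simply omits this term), but this affects neither the stated $O(nmdl)$ bound for Eq.~(\ref{eq_graph_expanded}) nor the linear scalability in $n$.
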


\begin{proposition}[Expressiveness]
The formulation of GraphMinNet is more powerful than the 1-WL test but not more powerful than the 3-WL test.
\label{pro_express}
\end{proposition}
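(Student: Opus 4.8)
The plan is to prove the two bounds separately: first that GraphMinNet is at least as powerful as the $1$-WL test and strictly separates some $1$-WL-equivalent graphs, and second that it is dominated by the $3$-WL (equivalently $2$-FWL) test. The organizing observation is that, up to the feature-dependent gates $z_u$ and $\tilde h_u$, every quantity the model computes is a spectral invariant: by \eqref{eq_Cu} and \eqref{eq_reformulation_inner-product} the positional contribution to $h_u$ is an aggregation of inner products of the form $\langle C_u, C_v\rangle = \sum_{k=1}^{d} \phi(\lambda_k)\,\tilde V[u,k]\,\tilde V[v,k]$, so choosing each $\phi_i$ to realize a polynomial in $\Lambda_d$ makes these inner products equal to entries $g(L)_{uv}$ of a matrix function of the Laplacian.

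\textbf{Lower bound.} To show domination of $1$-WL, I would take the $\phi_i$ to realize the monomials $\lambda\mapsto\lambda^p$, so that the Type-$2$/Type-$3$ positional inner products recover the entries $(L^p)_{uv}$, and in particular the adjacency pattern (since $L=D-A$). Then $\langle a_u,a_v\rangle$ acts as an adjacency-like kernel, so the global sum $\bar a=\sum_{v}a_v$ in \eqref{eq_reformulation_inner-product} reduces to an aggregation over the neighbors of $u$; combined with the learnable feature map in $A_u$ (\eqref{eq_Au}) one can make this aggregation injective on neighborhood multisets, and a stack of such layers reproduces the color refinement of $1$-WL. For \emph{strict} separation I would exhibit a pair that $1$-WL cannot distinguish but whose Laplacian spectra differ, e.g.\ the cycle $C_6$ versus the disjoint union of two triangles $C_3\cup C_3$: both are $2$-regular, so $1$-WL assigns them identical stable colorings, yet their Laplacian eigenvalue multisets $\{0,1,1,3,3,4\}$ and $\{0,0,3,3,3,3\}$ differ, and the term $\phi_i(\Lambda_d)$ in \eqref{eq_Au}--\eqref{eq_Cu} separates them.

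\textbf{Upper bound.} To show GraphMinNet is no more powerful than $3$-WL, I would argue that every term entering $h_u$ in \eqref{eq_graph_expanded} or \eqref{eq_GraphMinNet_with_self} is a function of the node features together with pairwise spectral quantities of the form $g(L)_{uv}$ and their row sums $\sum_v g(L)_{uv}$. It is standard that $2$-FWL maintains a coloring of ordered pairs that is closed under matrix multiplication and Hadamard product, so it computes the coherent algebra generated by $A$ (hence all polynomials in $L$) together with all spectral moments $\mathrm{tr}(L^k)$; consequently, if $3$-WL fails to distinguish $G$ from $G'$, all such pairwise spectral invariants and their contractions coincide. Since the gates $z_u,\tilde h_u$ depend only on $x_u$, the resulting multisets of node states $\{h_u\}$ then agree on $G$ and $G'$, so GraphMinNet cannot separate them either.

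\textbf{Main obstacle.} The hard part is the upper bound, for two reasons. First, the eigenvectors $\tilde V$ are defined only up to signs and, within degenerate eigenspaces, up to orthogonal rotations; I must verify that the inner-product aggregations in Types $1$--$4$ are invariant under these symmetries, so that they are genuine graph invariants rather than basis artifacts, and that restricting to the top $d$ eigenpairs leaks no information outside the $2$-FWL-computable class. Second, the filters $\phi_i$ are arbitrary learnable (Lipschitz) maps, so I must pass from polynomials in $L$ to general $\phi$ by Weierstrass approximation on the finite eigenvalue set $\Lambda_d$ and show the induced invariants still lie in the algebra that $2$-FWL computes; establishing this containment rigorously, and uniformly over the admissible $\phi$, is the crux of the argument.
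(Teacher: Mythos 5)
Your proposal is correct in substance, and it splits into a half that matches the paper and a half that genuinely diverges. Your lower bound is essentially the paper's own argument: the paper sets $B=I$, $d=l$, $\phi(\Lambda_d)=\Lambda_d^{1/2}$ so that $\langle a_u,a_v\rangle=\langle C_u,C_v\rangle\langle A_u,A_v\rangle=A_{u,v}\langle A_u,A_v\rangle$, i.e.\ the positional inner product collapses to an adjacency entry and the layer simulates message passing, hence is at least 1-WL; your monomial choice $\lambda\mapsto\lambda^{p}$ is the same device. You are in fact more careful than the paper on two points: the paper's strictness claim is a single unsupported sentence (``the additional structural components make it strictly more powerful''), whereas you exhibit a concrete witness, $C_6$ versus $C_3\cup C_3$, which 1-WL cannot separate but whose Laplacian spectra $\{0,1,1,3,3,4\}$ and $\{0,0,3,3,3,3\}$ differ (both spectra are correct); and you note explicitly that simulating iterated color refinement needs stacked layers, which the paper leaves implicit. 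Your upper bound, by contrast, takes a different route: the paper disposes of it in one sentence by observing that GraphMinNet is an eigenspace-projection GNN using a basis-invariant function of the positional encoding and citing the theorem of \cite{zhang2024expressive} that such architectures cannot exceed 3-WL, whereas you sketch a self-contained coherent-algebra argument (2-FWL pair colorings are closed under matrix and Hadamard products, hence compute all polynomials in $L$ and all moments $\mathrm{tr}(L^k)$; moments determine the spectrum, so 3-WL-equivalent graphs are cospectral; a learnable $\phi$ on the common finite spectrum can then be replaced by its exact Lagrange interpolation polynomial --- interpolation, not Weierstrass approximation, is the clean way to do the step you flag --- after which every quantity in \eqref{eq_reformulation_inner-product} is a 2-FWL-computable contraction). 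That argument can be completed, and the obstacles you name are the right ones; note that the basis-ambiguity worry largely dissolves once you group sums by distinct eigenvalues, since $\sum_k g(\lambda_k)\tilde V[u,k]\tilde V[v,k]=\sum_{\lambda}g(\lambda)(P_\lambda)_{uv}$ depends only on the spectral projections $P_\lambda$, the sole genuine irritant being a top-$d$ truncation that splits a degenerate eigenspace --- a case the paper silently ignores as well. In short: the paper's upper bound is shorter and rigorous modulo an external theorem; yours is longer but self-contained and makes explicit \emph{why} spectral invariants of this form fall under 3-WL.
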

These properties collectively demonstrate that GraphMinNet achieves efficient long-range dependency modeling with linear complexity while maintaining strong expressive power between 1-WL and 3-WL tests. 

\section{Experiments}
\begin{table*}[t!]
\centering
\caption{\label{tab:main_result}Performance comparison on various datasets. Best results are represented in colors by $\first{\mathbf{first}}$, $\second{\mathbf{second}}$, and $\third{\mathbf{third}}$. Results are reported as $\text{mean}_{\pm \text{std}}$.}
\resizebox{\linewidth}{!}{%
\begin{tabular}{lcccccccccccccc}
\toprule
\multicolumn{1}{c}{\multirow{2}{*}{}} & MNIST & CIFAR10 & PATTERN & CLUSTER  & Molhiv & PascalVOC-SP & Peptides-func & Peptides-struct & ZINC & MalNet-Tiny & Avg. Rank\\
\cmidrule(l{5pt}r{5pt}){2-12}
\multicolumn{1}{c}{}                      & Accuracy $\uparrow$ & Accuracy $\uparrow$ & Accuracy $\uparrow$ & Accuracy $\uparrow$  & AUROC $\uparrow$ & F1 score$\uparrow$  & AP $\uparrow$  & MAE $\downarrow$ & MAE $\downarrow$ & Accuracy $\uparrow$&Lower better$\downarrow$\\
\midrule
GCN & $90.705_{\pm 0.218}$ & $55.710_{\pm 0.381}$ & $71.892_{\pm 0.334}$ &  $68.498_{\pm 0.976}$ & $75.99_{\pm1.19}$  & $0.1268_{\pm 0.0060}$ & $0.5930_{\pm 0.0023}$ & $0.3496_{\pm 0.0013}$ &$0.367_{\pm 0.011}$ & ${81.0}$&$9.70$\\
GIN & $96.485_{\pm 0.252}$ & $55.255_{\pm 1.527}$ & $85.387_{\pm 0.136}$ & $64.716_{\pm 1.553}$ & $77.07_{\pm 1.49}$ & $0.1265_{\pm 0.0076}$ & $0.5498_{\pm 0.0079}$ &$ 0.3547_{\pm 0.0045}$ & $0.526_{\pm 0.051}$&$88.98_{\pm0.56}$&$9.90$\\
GAT & $95.535_{\pm 0.205}$ & $64.223_{\pm 0.455}$ & $78.271_{\pm 0.186}$ & $70.587_{\pm 0.447}$ & $-$ & $-$ &$-$&$-$ & $0.384_{\pm 0.007}$& $92.10_{\pm0.24}$&$9.50$\\
GatedGCN & $97.340_{\pm 0.143}$ & $67.312_{\pm 0.311}$ & $85.568_{\pm 0.088}$ & $73.840_{\pm 0.326}$ & $-$ & $0.2873_{\pm 0.0219}$ & $0.5864_{\pm 0.0077}$ &$ 0.3420_{\pm 0.0013}$  &$-$&$92.23_{\pm0.56}$&$8.25$\\
\midrule
SAN & $-$ & $-$ & $86.581_{\pm 0.037}$ & $76.691_{\pm 0.650}$ & $77.85_{\pm2.47}$ & $0.3216_{\pm 0.0027}$ & $0.6439_{\pm 0.0075}$ &$0.2545_{\pm 0.0012}$ &$0.139_{\pm 0.006}$&$-$ &$7.43$\\
GraphGPS & $98.051_{\pm 0.126}$ & $72.298_{\pm 0.356}$ & $86.685_{\pm 0.059}$ & $78.016_{\pm 0.180}$  & $\third{\mathbf{78.80}_{\pm 1.01}}$  & $0.3748_{\pm 0.0109}$ & $0.6535_{\pm 0.0041}$ &$ 0.2500_{\pm 0.0005}$&$0.070_{\pm 0.004}$ &$93.50_{\pm0.41}$&$5.70$\\
Exphormer &  $\second{\mathbf{98.550}_{\pm 0.039}}$ & ${74.690}_{\pm 0.125}$ & ${86.740}_{\pm 0.015}$ & ${78.070}_{\pm 0.037}$ & ${78.79}_{\pm1.31}$ & ${0.3975}_{\pm 0.0037}$ & $0.6527_{\pm 0.0043}$ & $0.2481_{\pm 0.0007}$  &$0.111_{\pm 0.007}$&$\second{\mathbf{94.02}_{\pm0.21}}$&$4.70$\\
Grit & $98.108_{\pm 0.111}$ &${76.468}_{\pm 0.881}$ &$ \third{\mathbf{87.196}_{\pm 0.076}}$ &$ \first{\mathbf{80.026}_{\pm 0.277}}$ & $-$  & $-$ & ${0.6988}_{\pm 0.0082}$&
${0.2460}_{\pm 0.0012}$ &$\first{0.059_{\pm 0.002}}$&$-$&$\third{3.29}$ \\
GRED & $98.383_{\pm0.012}$ & $\third{\mathbf{76.853}_{\pm0.185}}$ & ${86.759}_{\pm0.020}$ & ${78.495}_{\pm0.103}$  & $-$ & $-$ & $\second{\mathbf{0.7133}_{\pm0.0011}}$ & $\second{\mathbf{0.2455}_{\pm0.0013}}$&$0.077_{\pm 0.002}$&$-$ &$3.71$\\
Graph-Mamba & $98.420_{\pm 0.080}$ & $73.700_{\pm 0.340}$ & $86.710_{\pm 0.050}$ & $76.800_{\pm 0.360}$ & $78.23_{\pm1.21}$ & $\third{\mathbf{0.4191}_{\pm 0.0126}}$ & ${0.6739}_{\pm 0.0087}$ &$ {0.2478}_{\pm 0.0016}$  &$0.067_{\pm 0.002}$&$93.40_{\pm0.27}$&$5.00$\\
GSSC & $\third{{98.492}_{\pm 0.051}}$ & $\second{\mathbf{77.642}_{\pm 0.456}}$  &
$\second{\mathbf{87.510}_{\pm 0.082}}$& 
$\third{\mathbf{79.156}_{\pm 0.152}}$&
$\second{\mathbf{80.35}_{\pm 1.42}}$ &  
$\first{\mathbf{0.4561}_{\pm 0.0039}}$ & 
$\third{\mathbf{0.7081}_{\pm 0.0062}}$ & $\third{\mathbf{0.2459}_{\pm 0.0020}}$&$\third{0.064_{\pm 0.002}}$&$\first{\mathbf{94.06}_{\pm0.64}}$ & \second{$2.30$}\\
\midrule
Ours & $\first{\mathbf{98.598}_{\pm 0.138}}$&$\first{\mathbf{78.068}_{\pm0.785}}$&$\first{\mathbf{87.552}_{\pm0.123}}$&$\second{\mathbf{79.284}_{\pm 0.122}}$&$\first{\mathbf{80.86}_{\pm0.56}}$&$\second{\mathbf{0.4352}_{\pm0.0030}}$&$\first{\mathbf{0.7182}_{\pm0.0024}}$&$\first{\mathbf{0.2438}_{\pm0.0014}}$& $\second{0.063_{\pm 0.001}}$&$\third{\mathbf{93.72}_{\pm0.29}}$ &\first{$1.50$}\\
\bottomrule
\end{tabular}
}
\end{table*}
In this section, we present our experimental results. Table~\ref{tab:main_result} summarizes the performance of our proposed approach compared against several robust baselines across multiple datasets. We evaluated our method on 10 diverse datasets, with detailed descriptions provided in Appendix~\ref{appendix:dataset} (Table~\ref{tab:dataset}).

Our evaluation encompasses several key dataset categories: 1) The Long Range Graph Benchmark~\citep{dwivedi2022long}, which requires effective reasoning over long-range interactions, consisting of three tasks: Peptides-func (graph-level classification with 10 peptide functional labels), Peptides-struct (graph-level regression of 11 molecular structural properties), and PascalVOC-SP (superpixel classification in image graphs). 2) Molecular graph datasets, including ZINC~\citep{dwivedi2023benchmarking} for graph regression of molecular properties and ogbg-molhiv~\citep{hu2020open} with 41k molecular graphs for classification. 3) Image graph datasets MNIST and CIFAR10~\citep{dwivedi2023benchmarking}, represented as 8-nearest neighbor superpixel graphs for classification. 4) Synthetic graph datasets PATTERN and CLUSTER~\citep{dwivedi2023benchmarking}, generated using the Stochastic Block Model for node-level community classification. 5) Function call graphs from MalNet-Tiny~\citep{freitas2020large} for classification tasks. Our experimental results across these diverse settings demonstrate our method's robustness and versatility.

We report the experimental results in Table~\ref{tab:main_result}, following the evaluation protocols from GSSC~\cite{huang2024can} with mean and standard deviation across five random seeds (0 to 4). We report relevant hyperparameters in Appendix~\ref{sec:hyerparameters}, Table~\ref{tab:hyp}.
The results demonstrate GraphMinNet's superior performance, achieving the best results on 6 out of 10 datasets and ranking second on 3 datasets, leading to the highest overall average rank among all methods. Even on the remaining dataset, GraphMinNet shows competitive performance comparable to state-of-the-art baselines. 

We further evaluate GraphMinNet's efficiency in terms of both model parameters and computational cost. Table~\ref{tab:params} compares parameter counts against recent baselines, while Fig.~\ref{fig:run_time} shows runtime analysis. Our model shows better or comparable efficiency to SOTA models.  
\begin{table*}[t!]
\centering
\caption{\label{tab:params}Model parameter count comparison across datasets. The lowest and second-lowest parameter counts are highlighted in $\first{\mathbf{first}}$ and $\second{\mathbf{second}}$, respectively.}
\resizebox{\linewidth}{!}{%
\begin{tabular}{lccccccccccccc}
\toprule
\multicolumn{1}{c}{\multirow{2}{*}{}} & MNIST & CIFAR10 & PATTERN & CLUSTER  & Molhiv & PascalVOC-SP & Peptides-func & Peptides-struct & ZINC &MalNet-Tiny\\
\cmidrule(l{5pt}r{5pt}){2-11}
GraphGPS & \second{115.39K}&\second{112.73K}&\first{337.20K}&502.05K&558.63K&510.45K&504.36K&504.46K&\second{423.72K}&\third{527.24K}\\
Grit &\first{102.14K}&\first{99.486K}& 477.95K&\second{432.21K}&-&-&443.34K&438.83K&473.47K&-\\
GSSC &133k&131k&539k&539k&\second{351k}&\first{375k}&\second{410k}&\second{410k}&436k&\second{299K}\\
\midrule
Ours &122.82K&120.13K&\second{431.54K}&\first{432.01K}&\first{338.80K}&\second{474.10K}&\first{386.32K}&\first{391.92K}&\first{415.28K}&\first{279.17K}\\
\bottomrule
\end{tabular}
}
\end{table*}
\begin{figure}[ht]
    \centering
    \includegraphics[width=0.6\linewidth]{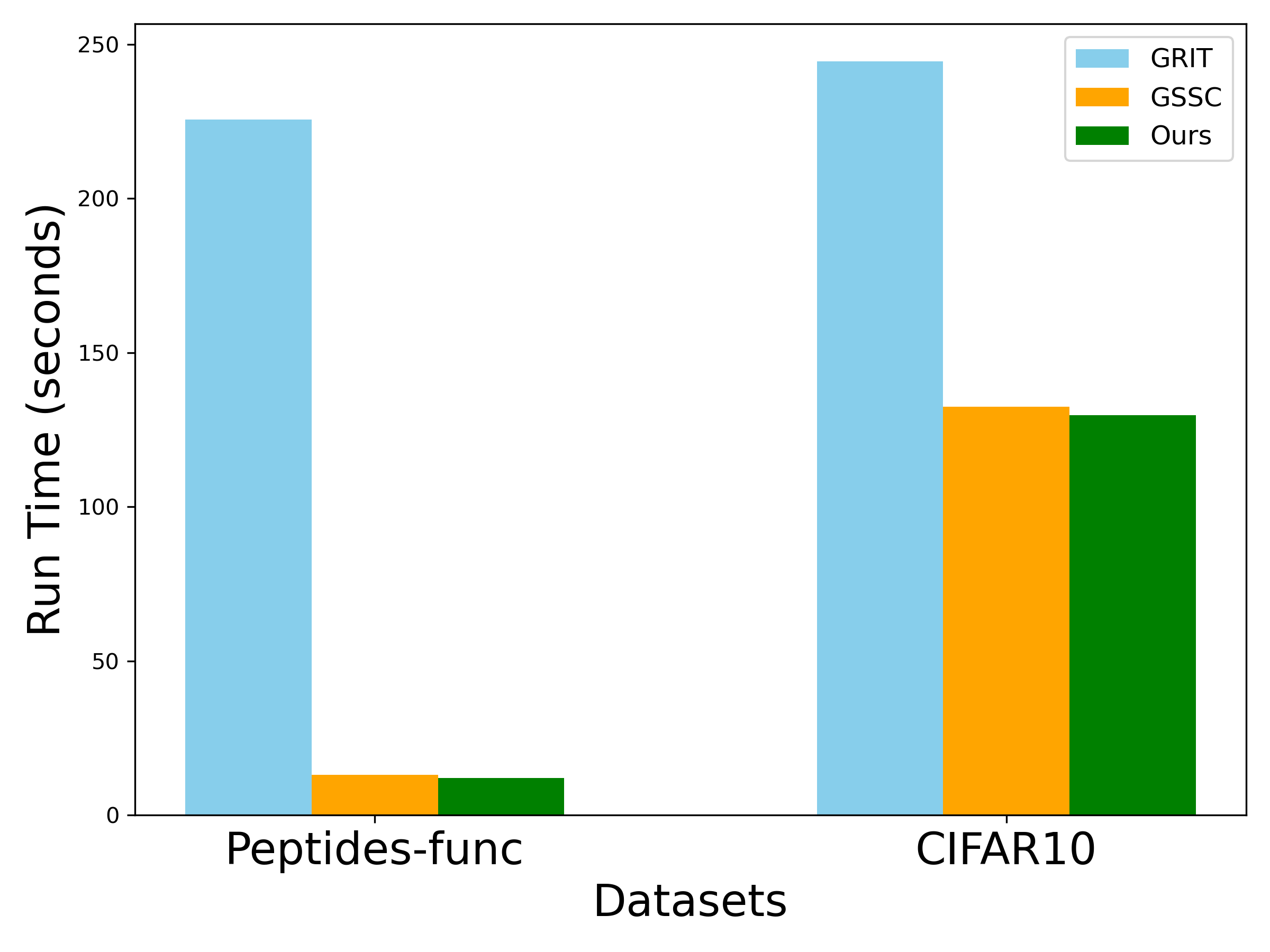}
    \caption{Run time comparison per epochs including train, validation, and test phases.}
    \label{fig:run_time}
\end{figure}

To demonstrate scalability, we conducted experiments on randomly generated graphs with node counts ranging from 1,000 to 20,000. These graphs were generated using \texttt{PyTorch Geometric}~\citep{pytorch_geometric} with an average node degree of 5, maintaining realistic sparsity constraints. Node features were randomly initialized, and we included eigenvalues, eigenvectors, and logarithmic degree values to simulate diverse graph properties.
We evaluated scalability through two metrics: FLOPs (Floating Point Operations), computed using \texttt{thop}~\footnote{https://pypi.org/project/thop}, and Maximum Memory Utilization, measured via  \texttt{torch.cuda.max\_memory\_allocated}. As shown in Figure~\ref{fig:scalability}, both FLOPs and memory utilization demonstrate linear growth with respect to the number of nodes. This linear scalability confirms our theoretical analysis and demonstrates our method's practical efficiency for large-scale graph applications.

\begin{figure}[ht]
    \centering
    \subfloat[FLOPs Utilization]{{\includegraphics[trim={0.25cm 0.25cm 0.3cm 0.25cm},clip,width=0.50\linewidth]{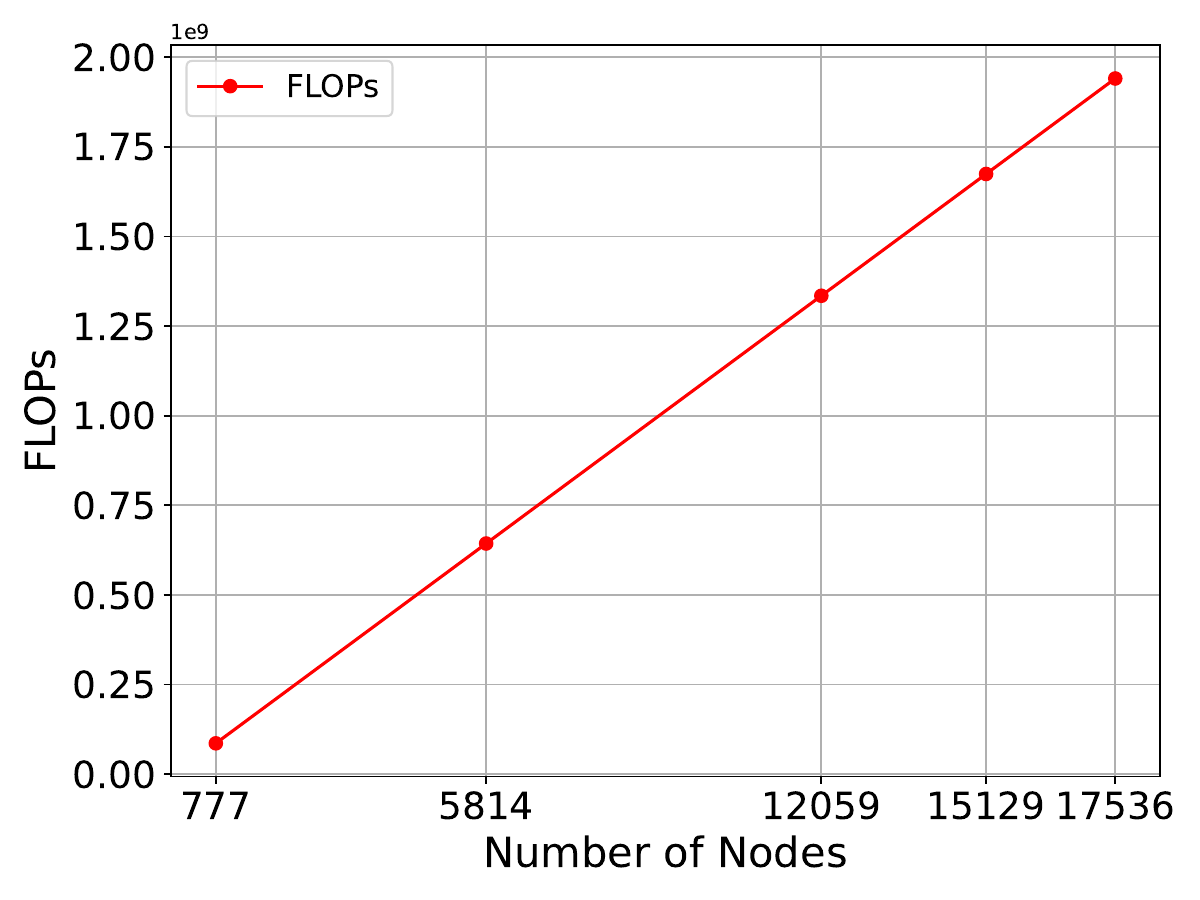} }}%
    \subfloat[Memory Utilization]{{\includegraphics[trim={0.25cm 0.25cm 0.3cm 0.85cm},clip,width=0.50\linewidth]{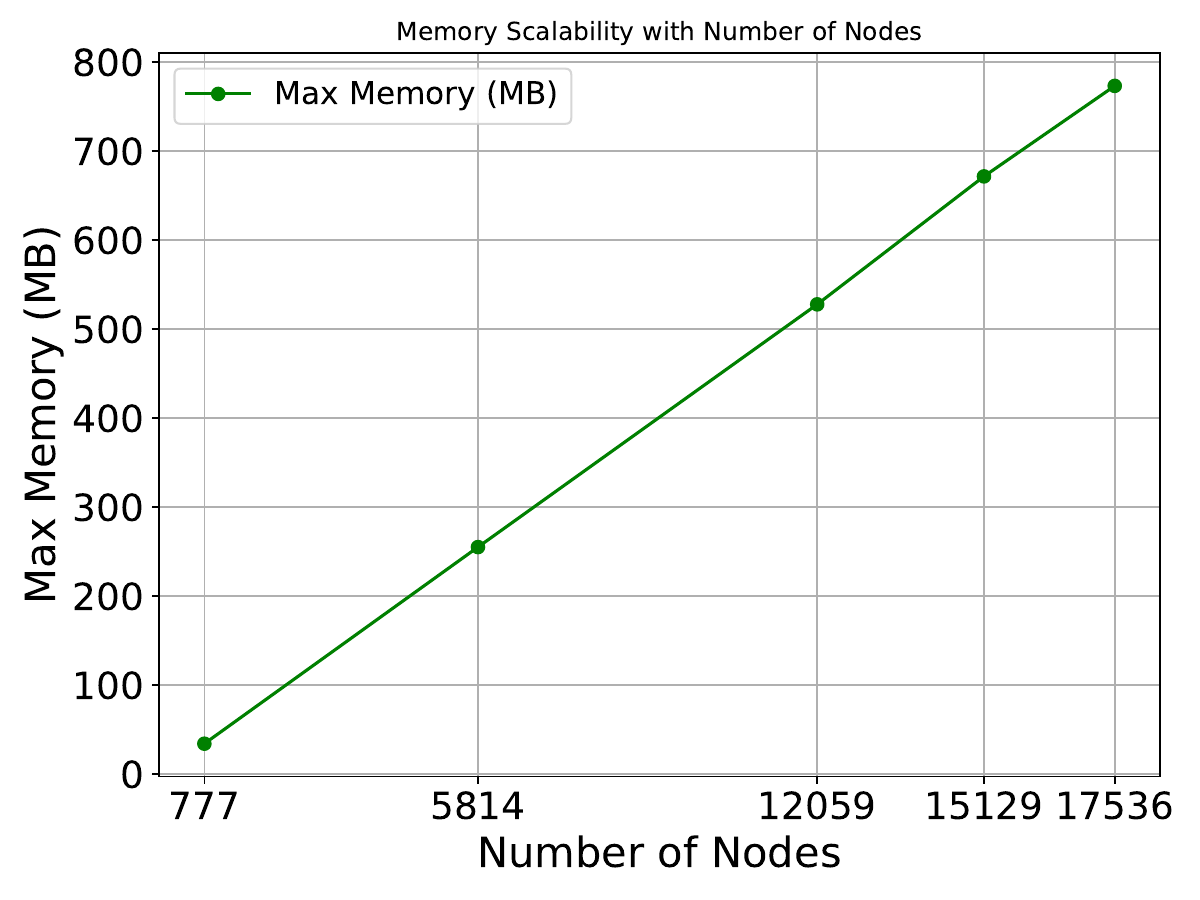} }}%
    \caption{Scalability analysis of GraphMinNet. (a) shows the linear growth of FLOPs, reflecting computational efficiency with increasing graph size, and (b) depicts the linear growth of maximum memory usage, demonstrating feasible memory requirements.}
    \label{fig:scalability}
\end{figure}

\subsection{Robustness Analysis}
To validate our theoretical stability results empirically, we evaluate our method's robustness to feature perturbations. We introduce controlled synthetic noise to node embeddings by adding Gaussian perturbations proportional to the feature magnitudes. For each node $x_u \in \mathbb{R}^l$, we compute the perturbed embedding $x'_u$ as:
\begin{equation}
x'_u = x_u + \epsilon \cdot n_u,
\end{equation}
where $\epsilon$ is the noise level, and $n_u \in {\mathcal{R}}^l$ is a noise term. 
We consider two types of noise: additive white noise $n_u = \mathcal{N}(0, I)$ and signal-dependent noise $n_u = \mathcal{N}(0, I) \odot \mu(x)$. Here, 
$\mathcal{N}(0, I)$ is standard multivariate Gaussian, and $\mu(x)$ denotes the mean feature magnitude. The second type models perturbations that scale appropriately with the underlying feature values.
Figure~\ref{fig:robustness} shows our method's performance on Molhiv and Peptides-Func datasets under increasing noise levels (0\%, 5\%, ..., 30\%). The results demonstrate consistent performance across noise levels, empirically confirming our theoretical stability guarantees.

\begin{figure}[ht]
    \centering
    \includegraphics[width=0.6\linewidth, clip]{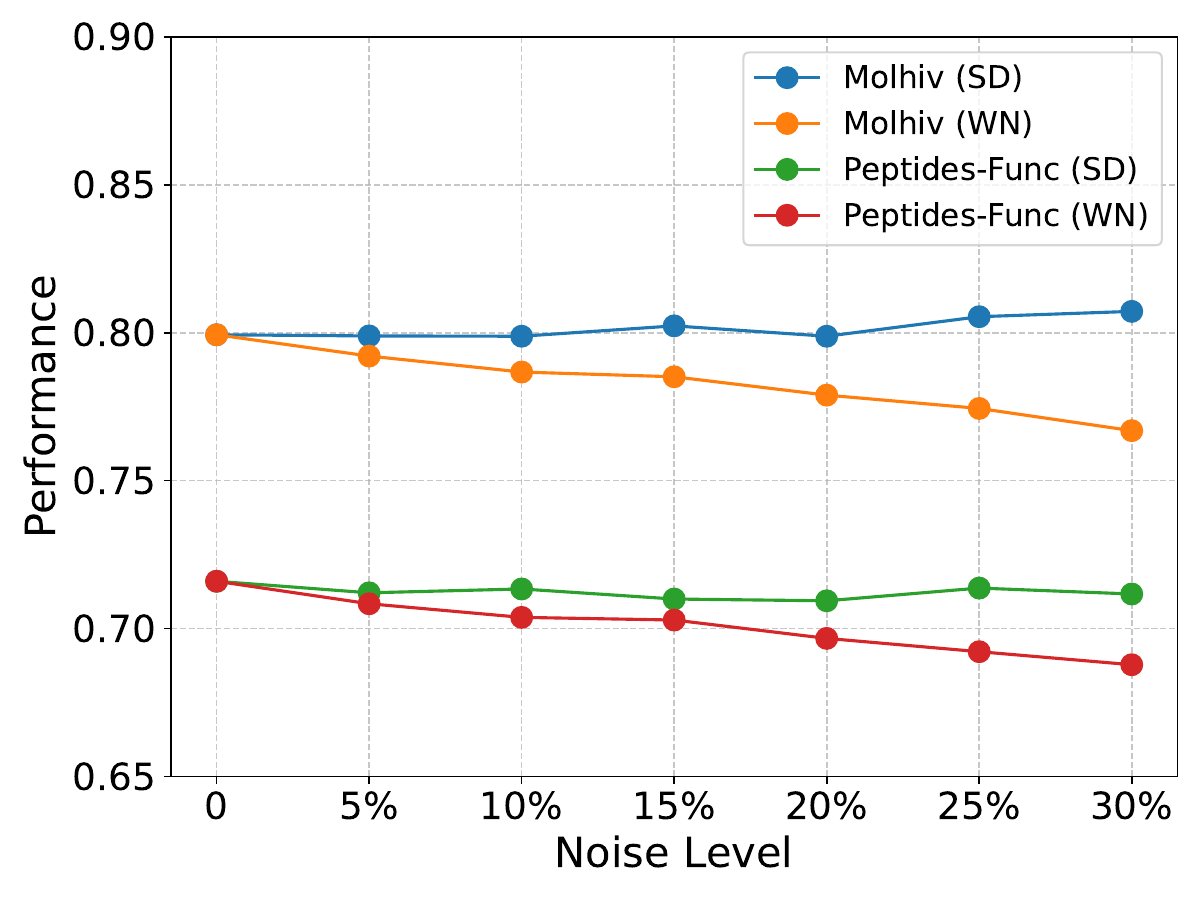}
    \caption{Robustness analysis of our model under varying noise levels. SD: signal-dependent; WN: white noise. }
    \label{fig:robustness}
\end{figure}

\section{Ablation Studies}
We conducted ablation studies to analyze key components of GraphMinNet, including the optional self-term contribution, dropout effectiveness, and local method impact.
\subsection{Optional Self-term}
We first investigate the impact of the self-term introduced in Eq. \eqref{eq_GraphMinNet_with_self}. Results in Table~\ref{tab:self_term} show dataset-dependent effects: while the self-term improves performance on Molhiv (AUROC increase of 2.55\%), it slightly decreases performance on CLUSTER and Peptides-func. This variation suggests the self-term's effectiveness correlates with specific graph properties - it appears more beneficial for molecular graphs with complex local structures (like Molhiv) compared to more regular graph structures (like CLUSTER).
\begin{table}[ht]
    \centering
    \resizebox{\linewidth}{!}{
    \begin{tabular}{lccc}
    \toprule
    Self-term & CLUSTER & Peptides-func & Molhiv\\
    \midrule
    \xmark & ${79.284}_{\pm 0.122}$ & ${0.7182}_{\pm 0.0024}$&${78.31}_{\pm 1.06}$\\
    \cmark & ${78.942}_{\pm 0.126}$&${0.7020}_{\pm0.0100}$& ${80.86}_{\pm 0.56}$\\
    \bottomrule
    \end{tabular}
  }
    \caption{Ablation on optional self-term in our method.}
    \label{tab:self_term}
\end{table}
\subsection{Embedding Representation}
We analyzed different methods for aggregating node embeddings $a_u$ in Eq.~\ref{eq:au}, comparing two element-wise operations for combining feature and positional information: multiplication and addition. Our results showed that element-wise multiplication achieves higher accuracy across most datasets (e.g., two datasets shown in Table~\ref{tab:au}). Based on these results, we adopt element-wise multiplication by default except for MalNet-Tiny, which uses element-wise addition.
\begin{table}[ht]
    \centering
    \begin{tabular}{lcc}
    \toprule
    Aggregation & Peptides-func & Molhiv\\
    \midrule
    Multiplication&${0.7182}_{\pm0.0024}$&${80.86}_{\pm0.56}$\\
    Addition&${0.6941}_{\pm0.0028}$ & ${80.19}_{\pm0.32}$\\
    \bottomrule
    \end{tabular}
    \caption{Comparing aggregation operation $\oplus_1$.}
    \label{tab:au}
\end{table}
\subsection{Effectiveness of Dropout Regularization}
We analyze the impact of dropout regularization on model performance. Figure~\ref{fig:dropout} compares training and validation performance with and without dropout. The results demonstrate that dropout plays a crucial role in preventing overfitting - models without dropout show significant performance degradation on validation sets, particularly on complex datasets like Peptides-func and Molhiv. This confirms dropout's importance as a regularization mechanism in our architecture.
\begin{figure}[t]
    \centering
    \includegraphics[width=0.55\linewidth]{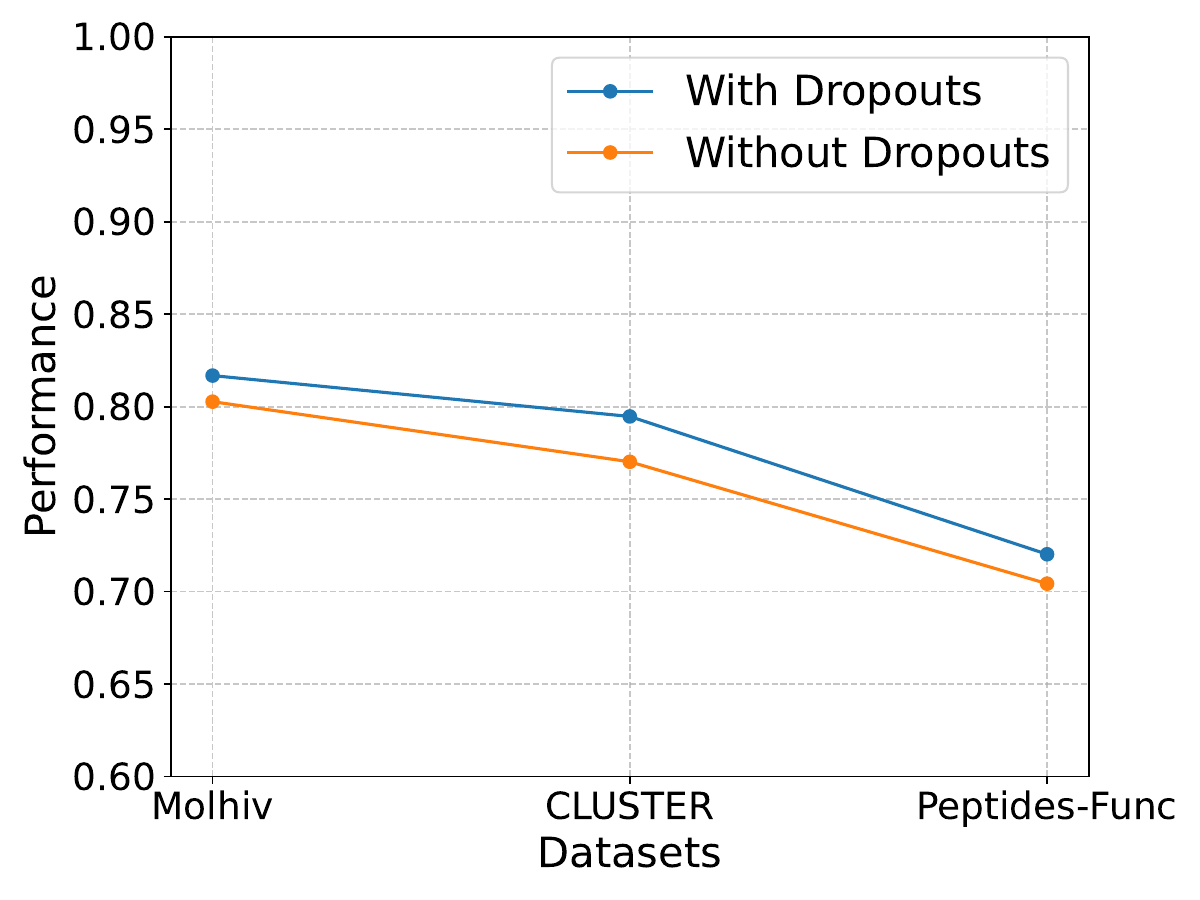}
    \caption{Effectiveness of dropouts.}
    \label{fig:dropout}
\end{figure}
\subsection{Local Method}
\begin{table}[ht]
    \centering
    \resizebox{\linewidth}{!}{
    \begin{tabular}{lccc}
    \toprule
    Method & CLUSTER & Peptides-func & Molhiv\\
    \midrule
    +GatedGCN & ${79.284}_{\pm 0.122}$&${0.7182}_{\pm0.0024}$&${80.86}_{\pm0.56}$\\
    +GINE&${77.200}_{\pm 0.415}$&${0.7129}_{\pm0.0037}$&${80.33}_{\pm0.83}$\\
    \bottomrule
    \end{tabular}
  }
    \caption{Ablation results with an additional local method.}
    \label{tab:local}
\end{table}
We evaluate GraphMinNet's stability when combined with different local methods that use local neighborhood information including edge features, such as GatedGCN and GINE. Table~\ref{tab:local} shows that GraphMinNet maintains consistent performance across different local methods: performance variations remain within about $2\%$ on CLUSTER, $0.6\%$ on Peptides-func, and within about $0.6\%$ on Molhiv. These minimal differences across diverse datasets demonstrate that GraphMinNet can effectively integrate local structural information regardless of the specific local method employed, confirming its architectural robustness and versatility.
\subsection{Nonlinear $W$ and $B$}
\label{sub:non_w_b}
\begin{table}[ht]
    \centering
    \resizebox{\linewidth}{!}{
    \begin{tabular}{lccc}
    \toprule
    Method & CLUSTER & Peptides-func & Molhiv\\
    \midrule
    Linear & ${79.284}_{\pm 0.122}$ & ${0.7182}_{\pm 0.0024}$&${80.86}_{\pm 0.56}$\\
    Nonlinear&${78.672}_{\pm 0.073}$&${0.6526}_{\pm0.0067}$&${79.76}_{\pm1.16}$\\
    \bottomrule
    \end{tabular}
  }
    \caption{Ablation on linear versus nonlinear $W$ and $B$.}
    \label{tab:non_linear}
\end{table}
In our method, we utilized single layer linear transformation for matrix $W$ and $B$. However, to explore further, instead of only using linear projection, we perform ablation study with two linear layers and a nonlinear activation such as ReLU/GELU/SiLU in between in both $W$ and $B$ matrix. We report the results in Table~\ref{tab:non_linear}. From this, we can see that, for Molhiv and CLUSTER, the performance variation is around $1\%$. However, for Peptides-func, performance drops around $7\%$. Therefore increasing more layers may lead to overfitting performance for graph data. Particularly for smaller datasets like Peptides-func.
\subsection{Projection Strategy on $z_u$ and $\tilde{h}_u$}
\begin{table}[ht]
    \centering
    \resizebox{\linewidth}{!}{
    \begin{tabular}{lccc}
    \toprule
    Method & CLUSTER & Peptides-func & Molhiv\\
    \midrule
    LP & ${79.284}_{\pm 0.122}$ & ${0.7182}_{\pm 0.0024}$&${80.86}_{\pm 0.56}$\\
    NLP &${78.980}_{\pm 0.239}$&${0.7038}_{\pm0.0040}$&${79.95}_{\pm1.34}$\\
    \bottomrule
    \end{tabular}
  }
    \caption{Ablation study comparing linear versus nonlinear projection utilized to achieve $z_u$ and $\tilde{h}_u$.}
    \label{tab:nlp}
\end{table}
Inspired by minGRU~\citep{feng2024were}, we adopt a linear projection (LP) for both $z_u$ and $\tilde{h}_u$ in our method. However, to further explore the projection strategy, we conduct an ablation study by employing a nonlinear projection (NLP), similar to Subsection~\ref{sub:non_w_b}. The results presented in Table~\ref{tab:nlp} indicate that NLP maintains competitive performance and may be beneficial for datasets with higher-dimensional node features to capture complex patterns. However, since NLP does not provide significant improvements while introducing additional parameters, we adopt LP as the default projection strategy for all datasets.

\section{Conclusion}
This paper presents GraphMinNet, a novel graph neural network that effectively captures long-range dependencies while maintaining linear computational complexity. Our approach successfully generalizes the idea of minimal GRU to graph-structured data while preserving permutation equivariance and ensuring stability, with theoretical guarantees for non-decaying gradients over long distances.
Our key contributions include an efficient integration of node features with positional encoding, achieving linear complexity and scalability, along with theoretical proofs establishing the model's stability and expressiveness bounds between 1-WL and 3-WL tests. Extensive experimental results across diverse datasets demonstrate GraphMinNet's effectiveness, achieving superior performance while maintaining computational efficiency.
Future directions include extensions to dynamic graphs, applications to larger-scale networks, and adaptation to heterogeneous graph structures.

One limitation of our model is edge features are not explicitly accounted for. In our future study, we would like to incorporate edge feature directly in our model.
\section*{Impact Statement}
This work aims to advance machine learning methods for graph-structured data. While our technical contributions focus on graph neural networks, we acknowledge that ML systems can have broader societal impacts. Potential applications of our work include modeling social networks, analyzing biological systems, and understanding complex network interactions. However, careful consideration must be given to bias in graph construction and the representativeness of training data in deployment contexts. We particularly emphasize the importance of responsible data collection and proper validation when applying these methods to sensitive domains. We encourage future work to investigate these aspects and develop robust guidelines for ethical applications.
\section*{Acknowledgements}
This work was supported in part by the NSF under Grants IIS 2327113 and ITE 2433190; and the NIH under Grants P30AG072946. We would like to thank the NSF support for AI research resource with NAIRR240219. We would like to thank the University of Kentucky Center for Computational Sciences and Information Technology Services Research Computing for their support and use of the Lipscomb Compute Cluster and associated research computing resources.
\bibliography{2.references}
\bibliographystyle{icml2025}

\section{Appendix}
\subsection{Proofs of Model Properties}
\begin{proposition}
The GraphMinNet in Eq. (\ref{eq_graph_expanded}) is permutation equivariant. Moreover, if 
$\phi_i(\cdot)$ are Lipschitz, the GraphMinNet is also stable in terms of features and eigen values. 
\end{proposition}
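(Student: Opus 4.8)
The plan is to handle the two claims separately: first establish permutation equivariance by tracking how each intermediate quantity transforms under a node permutation $Q$ (with induced permutation $\pi$), and then establish Lipschitz stability by viewing $h_u$ as a composition of elementary maps and bounding the Lipschitz constant of each piece.

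For permutation equivariance, the key observation is that the Laplacian transforms as $L \mapsto QLQ^T$, so from $QLQ^T = (Q\tilde V)\Lambda(Q\tilde V)^T$ its eigenvalues are \emph{invariant} while its eigenvectors transform equivariantly, $\tilde V \mapsto Q\tilde V$ (for a fixed choice of eigenbasis). Consequently $\Lambda_d$ is unchanged, so every $\phi_i(\Lambda_d)$ is invariant, whereas the positional embeddings and features obey $p_u \mapsto p_{\pi^{-1}(u)}$ and $x_u \mapsto x_{\pi^{-1}(u)}$. Feeding these into Eqs.\ (\ref{eq_Au})--(\ref{eq:au}) shows that $A_u$, $C_u$, and hence $a_u = B(A_u \oplus_1 C_u)$ are permutation equivariant, since the invariant factor $\phi_i(\Lambda_d)$ simply rides along. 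The decisive step is that $\bar a = \sum_{v\in V} a_v$ is a \emph{global sum over all nodes} and is therefore permutation \emph{invariant}: $\sum_u a^{\mathrm{perm}}_u = \sum_u a_{\pi^{-1}(u)} = \bar a$. Since $z_u$ and $\tilde h_u$ in Eqs.\ (\ref{eq_graph_expanded})--(\ref{eq_tilde_h}) depend on $x_u$ through a shared linear map and a pointwise $\sigma$, they are equivariant, so the reformulation $h_u = \langle a_u,\bar a\rangle \odot z_u \odot \tilde h_u$ of Eq.\ (\ref{eq_reformulation_inner-product}) is the product of an equivariant factor with an invariant one, hence equivariant; this yields $f(QX, QAQ^T)=Qf(X,A)$. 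The same argument applies to the self-term variant (\ref{eq_GraphMinNet_with_self}), whose extra term $\langle W_{s1}C_u, W_{s2}C_u\rangle 1_l$ is purely node-local and thus equivariant.

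For stability, I would treat $h_u$ as a composition of the maps $x_u,\Lambda_d \mapsto (A_u,C_u) \mapsto a_u \mapsto (\langle a_u,\bar a\rangle, z_u, \tilde h_u) \mapsto h_u$ and show each arrow is Lipschitz, so the overall map is Lipschitz with constant equal to the product of the pieces. The linear maps $W$, $B$, and $\mathrm{Linear}(\cdot)$ are Lipschitz with constants equal to their operator norms; $\sigma$ is globally Lipschitz (constant $\tfrac14$); and the $\phi_i$ are Lipschitz by hypothesis, which is exactly where the eigenvalue dependence enters. A perturbation $\Lambda_d \mapsto \Lambda_d+\delta$ changes each $\phi_i(\Lambda_d)$ by at most $\mathrm{Lip}(\phi_i)\,\|\delta\|$, while a feature perturbation $x_u \mapsto x_u+\eta$ propagates through $A_u$, $z_u$, and $\tilde h_u$; treating the two sources jointly (with $d_i$ a norm on $(X,\Lambda_d)$ and $d_o$ a norm on $h$) and composing the per-arrow constants gives the claimed bound $d_o(f(G),f(G'))\le L\,d_i(G,G')$.

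The main obstacle is that the two kinds of products in the architecture — the element-wise $\odot$ in Eqs.\ (\ref{eq_Au})--(\ref{eq_Cu}) and the inner product $\langle\cdot,\cdot\rangle$ in Eq.\ (\ref{eq_reformulation_inner-product}) — are \emph{bilinear}, hence only locally Lipschitz: the Lipschitz constant of $x\mapsto a\odot x$ is $\|a\|_\infty$, which must be controlled. I would therefore first establish uniform boundedness of all intermediate quantities on the relevant domain: the eigenvector entries are bounded because the columns of $\tilde V$ are orthonormal (so $\|p_u\|\le 1$), the eigenvalues lie in a bounded range (e.g.\ $[0,2]$ for the normalized Laplacian), the features are assumed to lie in a compact set, and $z_u\in(0,1)^l$ is automatically bounded. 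Given these bounds, each bilinear map restricted to a product of bounded sets is Lipschitz with an explicit constant proportional to the bound on the frozen argument, and the global sum $\bar a=\sum_v a_v$ contributes a factor linear in $n$. Assembling these constants through the composition then completes the proof; the only delicate point is stating the boundedness assumptions cleanly so that the bilinear terms do not spoil the Lipschitz estimate.
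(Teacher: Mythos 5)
Your proposal is correct, and its permutation-equivariance half is essentially the paper's own argument: the paper likewise writes $\hat L = QLQ^T = (Q\tilde V)\Lambda (Q\tilde V)^T$, observes that $\Lambda$ (hence every $\phi_i(\Lambda_d)$) is invariant while $p_u$, $x_u$, $A_u$, $C_u$, $a_u$, $z_u$, $\tilde h_u$ transform equivariantly, and uses invariance of the sum over all nodes to conclude $\hat h_{q(u)} = h_u$. Your framing via $\bar a$ being permutation invariant in the reformulation $h_u = \langle a_u, \bar a\rangle \odot z_u \odot \tilde h_u$ is the same computation in different notation.

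The stability half is where you genuinely diverge, and in your favor. The paper's proof is brief: it notes $z_u = \sigma(c_1 x_u)$ and $\tilde h_u = c_2 x_u$ are Lipschitz, invokes ``composition of Lipschitz functions is Lipschitz'' to conclude feature stability, and handles the spectral side by applying Weyl's inequality to a symmetric perturbation $E$ of the adjacency matrix, getting $\|\Delta A_u\| \le \max_i L_i \|E\|_2$. It never addresses the point you correctly isolate as the main obstacle: the maps involved ($\odot$ between feature-dependent factors, the inner products $\langle a_u, \bar a\rangle$, and the product with $z_u \odot \tilde h_u$) are \emph{bilinear}, so a bare composition argument does not yield a global Lipschitz constant --- products of Lipschitz functions are only locally Lipschitz. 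Your plan of first establishing uniform boundedness of all intermediates (orthonormality giving $\|p_u\| \le 1$, bounded spectrum, $z_u \in (0,1)^l$, features in a compact set) and then restricting each bilinear map to bounded sets is exactly what is needed to make the paper's conclusion rigorous, at the price of stating a compactness assumption on the feature domain that the paper leaves implicit. You also flag honestly that the global sum $\bar a = \sum_v a_v$ injects a factor linear in $n$ into the constant, which the paper does not discuss. The one element the paper has that you omit is the Weyl's-inequality step converting an adjacency (structural) perturbation into an eigenvalue perturbation; since the proposition only claims stability ``in terms of features and eigenvalues,'' your direct treatment of $\Lambda_d \mapsto \Lambda_d + \delta$ suffices for the stated claim, but adding that one line would also recover the paper's stronger structural-stability remark.
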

\begin{proof}
We first establish three fundamental facts, which can be straightforwardly proved: 

\textbf{Fact 1.} Permutation equivariance holds for node-wise operations that are applied independently to each node's features.

\textbf{Fact 2.} The composition of permutation equivariant functions is also permutation equivariant.

\textbf{Fact 3.} The composition of Lipschitz functions is also Lipschitz.

\medskip
\textit{Permutation Equivariance:}\\
By Fact 1, $z_u$ and $\tilde{h}_u$ in Eq. (\ref{eq_graph_expanded}) are permutation equivariant, as both are node-wise operations. Next, we prove 
that $A_u$, $C_u$ and $h_u$ are also permutation equivariant. For a node permutation $Q$, let $q(u)$ denote the index of node $u$ after permutation. 
Then, the Laplacian for the permuted graph is 
\[
\hat{L} = Q L Q^T = (Q {\tilde{V}}) \Lambda  (Q {\tilde{V}})^T.
\] 

After node permutation, $\Lambda$ does not change. Then, we have the 
hat-version $\hat{p}_u$, $\hat{A}_u$, $\hat{C}_u$, and $\hat{h}_u$ for 
$\hat{L}$, which are counterparts of the corresponding quantities for $L$. 
Therefore:
\begin{align*}
\hat{p}_{q(u)} &= (Q \tilde{V}) [q(u), 1:d] = \tilde{V}[u, 1:d] = p_u,\\
\hat{A}_{q(u)} &=[\phi_1(\Lambda_d)\odot (W\hat{x}_{q(u)}), \cdots,  \phi_m(\Lambda_d)\odot (W\hat{x}_{q(u)})] \\
                &= [\phi_1(\Lambda_d)\odot (W x_{u}), \cdots,  \phi_m(\Lambda_d)\odot (Wx_{u})] = A_u,\\
\hat{C}_{q(u)} &=[\phi_1(\Lambda_d)\odot (\hat{p}_{q(u)}), \cdots,  
\phi_m(\Lambda_d)\odot (\hat{p}_{q(u)})] \\
                &= [\phi_1(\Lambda_d)\odot p_{u}, \cdots,  \phi_m(\Lambda_d)\odot p_{u}] = C_u.
\end{align*}

Thus, $\hat{a}_{q(u)} = a_u$. Finally, we have:  
\[
\begin{split}
\hat{h}_{q(u)} &= \sum_{v \in V} \hat{a}_{q(v)} \odot( \hat{a}_{q(u)} \odot \hat{z}_{q(u)} \odot \hat{\tilde{h}}_{q(u)}) \\
&= \sum_{v \in V} {a}_{v} \odot( {a}_{u} \odot {z}_{u} \odot {\tilde{h}}_{u})\\
&= h_u.
\end{split}
\]
Therefore, the GraphMinNet algorithm is permutation equivariant.

\medskip

\textit{Stability Analysis:}\\
First, we consider the feature stability. For any node $u$ and its feature vector $x_u$ with perturbation $\Delta x_u$,
\begin{itemize}
    \item $z_u = \sigma(c_1 x_u)$ is Lipschitz since $\sigma$ is Lipschitz
    \item $\tilde{h}_u = c_2 x_u$ is clearly Lipschitz with constant $|c_2|$.
\end{itemize}
By Fact 3 and chain rule, $h_u$ is Lipschitz as composition of Lipschitz functions.

For spectral and structural stability, consider a symmetric perturbation matrix $E$ to adjacency matrix $A$. By Weyl's inequality, we have
\[
|\lambda_i (A + E) - \lambda_i(A)| \le \|E\|_2.
\]
Since $\phi_i(\cdot)$ are Lipschitz by assumption with constants $L_i$, and $A_u$ involves composition of $\phi_i$ with eigenvalues, we have
\[
\|\Delta A_u\| \leq \max_i L_i \|E\|_2.
\]
Finally, as $h_u$ is Lipschitz in $a_u$ and $a_u$ is Lipschitz in $A_u$, $h_u$ is stable with respect to both feature and eigenvalue perturbations.
\end{proof}

\begin{proposition}
There exists $\phi(\cdot)$ such that the gradient norm $\|\frac{\partial h_u}{\partial x_v} \|$ of GraphMinNet does not
decay as $spd(u, v)$ grows (with $n$ tending to $\infty$), where $spd(u, v)$ is the shortest path distance between $u$ and $v$.
\end{proposition}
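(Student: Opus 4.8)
The plan is to compute the Jacobian $\partial h_u/\partial x_v$ in closed form and then exhibit one admissible choice of $\phi$ for which its norm is bounded below by a positive constant that is independent of $spd(u,v)$ and does not vanish as $n\to\infty$. Starting from the reformulation $h_u = \langle a_u,\bar a\rangle \odot z_u \odot \tilde h_u$ with $\bar a = \sum_{w\in V} a_w$, the first observation is that for $v\neq u$ the factors $a_u$, $z_u$, $\tilde h_u$ depend only on $x_u$, so the entire dependence of $h_u$ on $x_v$ enters through the single summand $a_v$ of $\bar a$. Hence by the product and chain rules $\partial h_u/\partial x_v = \mathrm{diag}(z_u\odot\tilde h_u)\,\partial\langle a_u,a_v\rangle/\partial x_v$, an $l\times l$ matrix whose structure I would write out explicitly for the concrete inner-product type used by the model.

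Next I would differentiate $a_v = B(A_v \oplus_1 C_v)$. Since $C_v$ is purely positional and independent of $x_v$, only $A_v = [\phi_1(\Lambda_d)\odot(Wx_v),\dots,\phi_m(\Lambda_d)\odot(Wx_v)]$ contributes, and its derivative with respect to $x_v$ is the fixed linear map whose $i$-th column block is $B\,\mathrm{diag}(\phi_i(\Lambda_d))\,W$. The key structural point is that this map depends only on the shared spectrum $\Lambda_d$ and the shared weights $B,W$; it is identical for every node $v$ and carries no explicit dependence on $spd(u,v)$. The non-decay is therefore built into the architecture: unlike message passing, where $\partial h_u/\partial x_v$ requires information to traverse a path of length $spd(u,v)$ and attenuates accordingly, here the global aggregation $\bar a = \sum_w a_w$ couples $u$ to every $v$ directly through a spectral/feature inner product with no distance-dependent attenuation.

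I would then fix $\phi$ and lower-bound the norm. Taking each $\phi_i(\Lambda_d)$ to be a fixed nonzero (for instance constant) vector makes the maps $\mathrm{diag}(\phi_i(\Lambda_d))$ nonsingular, and one can bound $\|\partial h_u/\partial x_v\|$ below by $\|z_u\odot\tilde h_u\|$ times the norm of $\partial\langle a_u,a_v\rangle/\partial x_v$. The essential idea is to route this lower bound through the feature component $A_u$ of $a_u$ rather than the positional component $C_u$: the eigenvalues $\Lambda_d$ lie in a bounded, non-vanishing range (e.g.\ $[0,2]$ for the normalized Laplacian) and the transformed features $Wx_u$ are $O(1)$, so $\|BA_u\|$ and hence the surviving cross term stay bounded away from $0$ uniformly in $n$, yielding a constant $c>0$ with $\|\partial h_u/\partial x_v\| \ge c$ independent of $spd(u,v)$.

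The hard part will be the $n\to\infty$ limit. The delicate point is that the Laplacian eigenvectors are unit-normalized, so the positional encodings $p_u = \tilde V[u,1:d]$ have entries of size $O(n^{-1/2})$, which would force any bound routed through $C_u$ to decay. The argument must therefore make explicit that the non-decaying contribution comes from the feature term $A_u$, governed by the $O(1)$ eigenvalues and features, not the positional term; this is precisely the feature-incorporation property that distinguishes GraphMinNet from purely positional encoders. A secondary bookkeeping obstacle is checking that the chosen inner-product type (and the $\oplus_1$ aggregation) does not silently reintroduce a factor vanishing with $n$ or with $spd(u,v)$; I would verify this for the concrete type used in the model, e.g.\ Type 3, where the inner product of vectorized matrices makes the feature-only cross term transparent. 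Combining these pieces establishes the claimed non-decay.
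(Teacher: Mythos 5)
Your Jacobian decomposition matches the paper's starting point: for $v \neq u$ the only dependence of $h_u$ on $x_v$ is through the summand $a_v$ of $\bar a$, so $\partial h_u/\partial x_v$ factors into $\mathrm{diag}(z_u \odot \tilde h_u)$ times $\partial \langle a_u, a_v\rangle/\partial x_v$. After that your route diverges from the paper's, and the divergence hides a genuine gap. Your plan is to extract the lower bound from a feature--feature cross term, taking $\phi_i$ constant and letting the positional contributions vanish. But a pure feature--feature term exists only if $\oplus_1$ is \emph{addition}: with the model's default elementwise \emph{multiplication} (the case the paper actually proves), column $i$ of $a_v$ is $B\,\mathrm{diag}(\phi_i^2 \odot p_v)\, W x_v$, so $\partial a_v/\partial x_v$ is \emph{not} the fixed node-independent map you claim --- every term of $\partial\langle a_u,a_v\rangle/\partial x_v$ carries factors of $p_u$ and $p_v$. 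With constant $\phi$ and fixed $d$ those factors are $O(n^{-1/2})$ each and your bound tends to $0$ as $n \to \infty$; with the full spectrum $d=n$ and a Type 4 product the gradient is exactly zero for $u \neq v$, since $p_u^\top p_v = \delta_{uv}$ by orthogonality of the eigenvector matrix. So what you call a ``secondary bookkeeping obstacle'' is in fact the central difficulty, and your construction fails on it outside the additive corner. Moreover, even in the additive/Type-3 corner where your argument can be completed, it proves the statement only vacuously: the surviving cross term $\sum_i \bigl(B\,\mathrm{diag}(\phi_i)W\bigr)^\top B\,\mathrm{diag}(\phi_i) W x_u$ is identical for every $v$, i.e.\ the gradient is asymptotically independent of the graph altogether. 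Non-decay then holds because the interaction ignores structure, not because information propagates along the graph, which is the property the proposition is meant to certify.

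The paper's proof does the opposite of what you propose, and your objection to that route is instructive to examine. It pushes the bound precisely through the positional factors, taking $d = n$ and $\phi(\lambda) = \sum_{k=1}^n b_k \lambda^k$ with $b_k > 0$, chosen so that $\phi_1^2(L) = \phi(\tilde A)$ for the normalized adjacency $\tilde A$. Then $p_u^\top \mathrm{diag}(\phi_1(\Lambda))^2\, p_v = e_u^\top \phi_1^2(L)\, e_v = \sum_k b_k (\tilde A^k)_{u,v}$: individual eigenvector entries are indeed $O(n^{-1/2})$, as you note, but the full spectral sum reconstructs an $O(1)$ matrix entry of a polynomial in $\tilde A$ --- a weighted count of walks from $u$ to $v$. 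Since a shortest path of length $k = spd(u,v)$ exists, $(\tilde A^k)_{u,v} > 0$, so the gradient stays strictly positive at any distance and the graph structure remains essential to the bound. Your blanket claim that any bound routed through $C_u$ must decay is therefore false for exactly the kind of $\phi$ the proposition asks you to exhibit; to repair your proof you would need either to commit explicitly to the additive aggregation and accept the structure-oblivious conclusion, or to adopt the paper's spectral-polynomial, walk-counting construction.
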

\begin{proof}
We only consider nonnegative adjacency matrix $A$ and $u \neq v$ since $spd(u,v) = 0$ for any node $u = v$.
Without loss of generality (w.l.o.g.), we consider the case of $l=m=1$ and $d=n$, where $x_v$, $z_v$, and
$\tilde{h}_v$ are scalars with $z_v = \sigma(c_1 x_v) > 0$, $\tilde{h}_v = c_2 x_v$, and $c_2 > 0$.
Let $\tilde{A} = D^{-1/2}AD^{-1/2}$ be the normalized adjacency matrix with $D$ being the diagonal degree matrix of the original adjacency matrix $A$. 
Then 
\begin{align*}
   A_u &= \phi_1(\Lambda) \odot (Wx_u) = {\texttt{diag}}(\phi_1(\Lambda)) (Wx_u)  \\ 
C_u &= \phi_1(\Lambda) \odot p_u.
\end{align*}
By taking $\oplus_1$ as $\odot$ (the case that it is $+$ can be proved similarly), we have 
\begin{align*}
a_u &= B (A_u \odot C_u) \\
&= B ({\texttt{diag}}(\phi_1(\Lambda))(W x_u)) \odot ({\texttt{diag}}(\phi_1(\Lambda))p_u). 
\end{align*}
Thus, we have 
\begin{align*}
    h_u &= \langle a_u, \bar{a} \rangle \odot z_u \odot \tilde{h}_u  \tag{by Eq. (\ref{eq_graph_expanded})} \\
    & = \sum_{v\in V} \langle B \texttt{diag}(\phi_1(\Lambda))(W x_u), B \texttt{diag}(\phi_1(\Lambda)) (W x_v) \rangle \\
    &\quad \quad \langle C_u, C_v \rangle  \odot z_u \odot \tilde{h}_u \\
    &= x_u  z_u \tilde{h}_u (B{\texttt{diag}(\phi_1(\Lambda)})W)^2 \\
    & \quad \sum_{v\in V} (x_v p_u^T {\texttt{diag}(\phi_1(\Lambda)})^2 p_v) {\tag{as $l=1$}}\\
    &= x_u  z_u \tilde{h}_u (B{\texttt{diag}(\phi_1(\Lambda)})W)^2 \\
    & \quad \sum_{v\in V} (x_v e_u^T \tilde{V}{\texttt{diag}(\phi_1(\Lambda)})^2 \tilde{V}^T e_v) \\
    &= x_u  z_u \tilde{h}_u (B{\texttt{diag}(\phi_1(\Lambda)})W)^2 \\
    & \quad \sum_{v\in V} (x_v e_u^T \phi_1^2(\tilde{V} \Lambda \tilde{V}^T) e_v)\tag{by spectral decomposition}\\
    &= x_u  z_u \tilde{h}_u (B{\texttt{diag}(\phi_1(\Lambda)})W)^2 \sum_{v\in V} (x_v e_u^T \phi_1^2(L) e_v)
\end{align*}
In the above, $e_u$ is the unit vector with the $u$-th element being 1; the second equality follows from the Type 4 definition of inner product in Eq. (\ref{eq_reformulation_inner-product}). While the proof is shown using Type 4, it holds for other types by appropriately choosing $B$. The learnable parameters $B$ and $W$ must be chosen to ensure $B{\texttt{diag}(\phi_1(\Lambda)})W \neq 0$.
We define $\phi(\lambda) = \sum_{k=1}^{n} b_k \lambda^k$ with positive constants $b_k > 0$.
Let $\phi_1^2(L) = \phi(\tilde{A})$.
Then, we have
\begin{align*}
\frac{\partial h_u}{\partial x_v} &= x_u  z_u \tilde{h}_u (B{\texttt{diag}(\phi_1(\Lambda)})W)^2 
e_u^T \phi_1^2(L) e_v \\
&= c_2 (B{\texttt{diag}(\phi_1(\Lambda)})W)^2  x_u^2 z_u \sum_{k=1}^n b_k (\tilde{A}^k)_{u,v}. 
\end{align*}
Let $k = spd(u,v) >0$. Since $(\tilde{A}^k)_{u,v}$ represents the degree-weighted sum of all walks of length $k$ from $u$ to $v$, 
and there exists at least one path of length $spd(u,v)$ between $u$ and $v$, it follows that $(\tilde{A}^k)_{u,v} > 0$. We denote this value by $\gamma$. Therefore, we have
\begin{align*}
\|\frac{\partial h_u}{\partial x_v}\| &= c_2 (B{\texttt{diag}(\phi_1(\Lambda)})W)^2 x_u^2 z_u | \sum_{k = 1}^n b_k \tilde{A}_{u,v}^k| \\
&\geq c_2 (B{\texttt{diag}(\phi_1(\Lambda)})W)^2 x_u^2 z_u b_k \gamma > 0.
\end{align*}
The last inequality holds since $b_k > 0$ and $(\tilde{A}^k)_{u,v} \geq 0$ for all $k$, making all terms in the sum non-negative.
This lower bound is independent of the distance $spd(u,v)$, proving that the gradient does not decay with distance and establishing GraphMinNet's long-range dependence property.
\end{proof}

Assuming that the eigenvalue decomposition of the adjacent matrix $A$ is precomputed and thus given, we have the following computational 
complexity for GraphMinNet: 
\begin{proposition}
The hidden states $h_1, \cdots, h_n$ can be computed from $x_1, \cdots, x_n$ with a complexity of $O(nmdl)$, where $n$ is the number of nodes, $m$ is the number of columns in node encoding ($A_u$, $C_u$, or $c_u$),
$d$ is the dimension of rows in node encoding, and $l$ is the feature dimension.

Additionally, the GraphMinNet algorithm in Equations (\ref{eq_graph_expanded}) or (\ref{eq_GraphMinNet_with_self}) achieves linear scalability with respect to the number of nodes.
\end{proposition}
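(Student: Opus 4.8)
The plan is to track the arithmetic cost of each stage of the forward computation and, crucially, to exploit the reformulation in Eq.\ (\ref{eq_reformulation_inner-product}) that replaces the apparent double sum over node pairs by a single precomputed aggregate. First I would bound the cost of forming the node embeddings. For each node $u$, computing $W x_u$ costs $O(dl)$; since $W x_u$ is shared across all $m$ columns of $A_u$, the element-wise products with the precomputed vectors $\phi_i(\Lambda_d)$ add only $O(dm)$, and likewise for $C_u$. Applying $B \in \mathcal{R}^{l\times d}$ to the $d\times m$ matrix $A_u \oplus_1 C_u$ dominates at $O(ldm)$ per node. Summing over all $n$ nodes gives $O(nmdl)$ for the full collection $\{a_u\}_{u\in V}$, which I expect to be the leading term in the final bound.

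The central step, and the main obstacle, is to avoid the naive $O(n^2)$ evaluation of Eq.\ (\ref{eq_graph_expanded}), in which each $h_u$ appears to require a sum over all $v\in V$. The key observation is that, by linearity of the inner product in its second argument, $\langle a_u, \bar a\rangle = \sum_{v\in V}\langle a_u, a_v\rangle$ with $\bar a = \sum_{v\in V} a_v$. Hence I would precompute $\bar a$ once in $O(nml)$ time, after which each $h_u = \langle a_u,\bar a\rangle \odot z_u \odot \tilde h_u$ costs only a single inner product, $O(ml)$ for Types 1--3, plus $O(l)$ for the element-wise gates, giving $O(nml)$ across all nodes. This linearization is precisely the mechanism that collapses the quadratic pairwise interaction into a linear-time aggregate, analogous to the kernel factorization used in linear attention.

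I would then dispatch the remaining terms. The gates $z_u = \sigma(\mathrm{Linear}(x_u))$ and $\tilde h_u = \mathrm{Linear}(x_u)$ are node-wise linear maps on $\mathcal{R}^l$, costing $O(l^2)$ each and $O(nl^2)$ overall, which is dominated by $O(nmdl)$ in the regime $l \le md$ of interest. For the Type 4 inner product appearing in Eq.\ (\ref{eq_GraphMinNet_with_self}), the factorization is more delicate, since a product of two inner products does not collapse through a single vector aggregate; I would instead precompute the aggregate outer-product matrix $M = \sum_{v\in V}\mathrm{vec}(A_v)\,\mathrm{vec}(BC_v)^T$ and evaluate $\mathrm{vec}(A_u)^T M\,\mathrm{vec}(BC_u)$ for each $u$, which still keeps the dependence on $n$ linear. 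Collecting all contributions then yields the stated $O(nmdl)$ complexity.

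Finally, the linear-scalability claim follows immediately: with $m$, $d$, and $l$ treated as fixed hyperparameters independent of graph size, every stage above is $O(n)$ in both time and in the memory required to store $\{a_u\}$, the aggregate $\bar a$ (or $M$), and the gate vectors. Thus both runtime and memory grow linearly in the number of nodes $n$, consistent with the empirical FLOP and memory curves reported in the experiments.
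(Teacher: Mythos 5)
Your proposal is correct and follows essentially the same route as the paper's proof: per-node cost accounting ($Wx_u$ at $O(dl)$, the $m$ columns of $A_u$, $C_u$ at $O(dm)$, the multiplication by $B$ at $O(ldm)$, gates at $O(l^2)$), followed by the observation that precomputing the single aggregate $\bar a = \sum_{v\in V} a_v$ collapses the apparent quadratic pairwise sum into one inner product per node. Where you go beyond the paper is in making the two load-bearing points explicit: first, that the collapse relies on linearity of the inner product in its second argument (the paper simply writes $\langle a_u, \bar a\rangle$ and counts its cost); second, that Type~4 inner products, being products of two inner products, do \emph{not} factor through $\bar a$ alone --- a gap the paper's proof silently ignores. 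Your fix via the aggregate matrix $M = \sum_{v}\mathrm{vec}(A_v)\,\mathrm{vec}(BC_v)^T$ is valid and preserves linearity in $n$, but note that forming $M \in \mathcal{R}^{dm\times lm}$ and evaluating each bilinear form costs $O(ndlm^2)$, overshooting the stated $O(nmdl)$ bound by a factor of $m$; one can recover the tighter bound by exploiting the rank-one column structure of $A_u$ and $C_u$ (all columns are $\phi_i(\Lambda_d)\odot(Wx_u)$ and $\phi_i(\Lambda_d)\odot p_u$), which reduces the needed aggregate to the $d\times d$ matrix $\sum_v (Wx_v)p_v^T$. Since $m,d,l$ are hyperparameters independent of $n$, this caveat does not affect the linear-scalability conclusion, and your argument for it matches the paper's (node-wise independence plus a single global aggregation).
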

\begin{proof}
{\emph{For linear complexity}}: This can be straightforwardly counted as follows. 
 For each node $u \in V$:
   \begin{enumerate}
       \item Computing $z_u, \tilde{h}_u \in \mathcal{R}^l$:\\
Linear projections and activation $\sigma(\cdot)$: $O(l^2)$
\item Computing node encodings:\\
$Wx_u$ with $W \in \mathcal{R}^{d \times l}$: $O(dl)$\\
$A_u$: $m$ columns × $d$ multiplications = $O(dm)$\\
$C_u$: Similarly $O(dm)$\\
$a_u = B(C_u \odot A_u)$:\\
Element-wise product $C_u \odot A_u$: $O(dm)$\\
Matrix multiplication with $B$: $O(ldm)$
\item Computing final output:\\
Inner product $\langle a_u, \bar{a}\rangle$: $O(ldm)$
\end{enumerate}
Total complexity per node: $O(l^2 + dl + dm + ldm) = O(ldm)$ (assuming $m, d > l$) 

Overall complexity for $n$ nodes: $O(nldm)$.

{\emph{For linear scalability}}: Given eigenvalue decomposition, computations are node-wise independent except for $\bar{a} = \sum_{v \in V} a_v$. The final step requires only one inner product per node with this global sum. Thus, the algorithm scales linearly with $n$ through:

   Independent node-wise computations
   
Single global aggregation

Final node-wise inner products. 
\end{proof}
The above property provides a certificate guaranteeing the linear complexity as well as the linear scalability with respect to the number of nodes of the graph. 

\begin{proposition}
The formulation of GraphMinNet is more powerful than WL test but not more powerful than 3-WL test. 
\end{proposition}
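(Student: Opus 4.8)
The plan is to prove the two inequalities separately: (i) that GraphMinNet is strictly more powerful than the $1$-WL test, and (ii) that GraphMinNet is no more powerful than the $3$-WL test. Throughout I use the standard equivalence between $3$-WL and the $2$-dimensional Folklore WL ($2$-FWL) refinement on node pairs, and the convention that two graphs are \emph{distinguished} by GraphMinNet iff some choice of its parameters produces different multisets of node states $\{h_u\}$. Direction (i) requires both a simulation argument and a concrete separating pair; direction (ii) requires only the containment ``anything GraphMinNet distinguishes, $3$-WL distinguishes,'' i.e. $3$-WL-equivalence forces GraphMinNet-equivalence.

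For (i), I would first show GraphMinNet $\ge$ $1$-WL by exhibiting a parameter setting under which its node update simulates an injective neighborhood aggregation. Choosing $\phi$ so that the matrix function $\phi_1^2(L)$ has a nonzero degree-one term in the normalized adjacency $\tilde A$ (as in the proof of Proposition~\ref{pro_lrd}), the aggregation in \eqref{eq_graph_expanded} contains $\sum_{v} x_v (\tilde A)_{uv}$, a degree-weighted sum over neighbors, together with a self-contribution through $z_u \odot \tilde h_u$. Composed with the injective feature maps realizable by $W$, $B$, and the linear projections (on a countable feature domain, in the style of GIN), this reproduces a $1$-WL-equivalent coloring, so any pair separated by $1$-WL is separated by GraphMinNet. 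For strictness I would give a concrete witness: a pair of non-isomorphic graphs that are $1$-WL-indistinguishable but Laplacian-cospectrally distinct, e.g. $C_6$ versus $2C_3$ (both $2$-regular, hence identically colored by $1$-WL, yet with different Laplacian spectra). Since $h_u$ depends on the eigenvalues through $\phi_i(\Lambda_d)$, GraphMinNet assigns them different representations, establishing strict separation.

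For (ii), the strategy is to reduce every quantity entering $h_u$ to a $2$-FWL-invariant of the corresponding node pair. Using the reformulation \eqref{eq_reformulation_inner-product}, $h_u$ is a function of $x_u$ (through $z_u,\tilde h_u$) and of the pairwise inner products $\langle a_u, a_v\rangle$. Expanding $a_u = B(A_u \oplus_1 C_u)$ with \eqref{eq_Au} and \eqref{eq_Cu}, each pairwise term under any of the inner-product Types 1--4 collapses into sums of feature bilinears $x_u^\top M x_v$ multiplied by positional entries $e_u^\top \psi(L)\, e_v$, where $\psi$ is the spectral function built from the $\phi_i$. The crucial observation is that individual eigenvectors $p_u$ never appear alone: the inner-product structure always pairs them into $e_u^\top \psi(L) e_v$, a genuine entry of a matrix function of $L$, which is basis- and sign-invariant. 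I would then invoke the standard fact that the stable $2$-FWL coloring of a pair $(u,v)$ determines $(A^k)_{uv}$ for every $k$ (the $2$-FWL update being closed under matrix multiplication), hence by functional calculus determines $e_u^\top \psi(L) e_v$ for any fixed $\psi$, as well as node degrees and the initial feature bilinears. Consequently, two graphs with identical $3$-WL colorings admit a color-preserving bijection of node pairs under which all these quantities agree, forcing equal states $\{h_u\}$ and equal global sum $\bar a$; thus GraphMinNet cannot separate $3$-WL-equivalent graphs.

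The main obstacle is the positional-encoding part of step (ii): one must verify rigorously that the learnable, merely permutation-equivariant maps $\phi_i$ acting on $\Lambda_d$ give rise to \emph{well-defined} spectral functions $\psi$ of $L$, so that the truncation to the top-$d$ eigenpairs and any eigenvalue degeneracy do not introduce basis-dependent (hence non-$3$-WL-invariant) quantities. Controlling the top-$d$ projection requires either a spectral-gap assumption or an argument that $\sum_{j\le d} \psi(\lambda_j)\, \tilde V[u,j]\tilde V[v,j]$ remains a function of $L$ recoverable by $2$-FWL. Establishing this uniformly across all four inner-product types, together with confirming the injectivity needed for the $1$-WL simulation in (i), is where the bulk of the careful work lies.
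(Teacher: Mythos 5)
Your lower-bound argument (i) is essentially the paper's own: the paper likewise forces the positional inner product to reproduce adjacency entries---it sets $B=I$, $d=l$, $\phi(\Lambda_d)=\Lambda_d^{1/2}$ so that $\langle a_u,a_v\rangle=\langle C_u,C_v\rangle\langle A_u,A_v\rangle=A_{u,v}\langle A_u,A_v\rangle$---and then appeals to the equivalence of message passing with 1-WL \cite{xu2018powerful}; your variant via $\phi_1^2(L)=\phi(\tilde A)$ is the same mechanism borrowed from the proof of Proposition~\ref{pro_lrd}. Your explicit strictness witness is in fact an improvement over the paper, which merely asserts that ``the additional structural components make it strictly more powerful'' without exhibiting a separating pair: $C_6$ versus $2C_3$ are both $2$-regular (hence 1-WL-equivalent) but have Laplacian spectra $\{0,1,1,3,3,4\}$ and $\{0,0,3,3,3,3\}$, so any $\phi_i$ injective on eigenvalue multisets separates them.

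For the upper bound (ii) you take a genuinely different route---a self-contained 2-FWL simulation---whereas the paper simply observes that GraphMinNet is an eigenspace-projection GNN built from basis-invariant functions of the positional encoding and cites \cite{zhang2024expressive} for the 3-WL bound. Your route has a genuine gap beyond the truncation issue you flag yourself. The ``crucial observation'' that eigenvectors only ever appear paired as entries $e_u^\top\psi(L)e_v$ of a matrix function of $L$ is not true uniformly over the architecture as defined. With additive aggregation $\oplus_1=+$ (which the paper actually uses, e.g.\ on MalNet-Tiny), expanding $\langle a_u,a_v\rangle$ produces cross terms such as $\langle BA_u,BC_v\rangle$ in which a single eigenvector row $p_v$ is contracted against a feature vector; such a term is sign- and basis-dependent, hence not an entry of any matrix function of $L$ and not a 2-FWL invariant of the pair. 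Even with multiplicative $\oplus_1$, the columns of $A_u\odot C_u$ are $\phi_i^2(\Lambda_d)\odot(Wx_u)\odot p_u$, so Type 1/3 products (and the Type-4 variant $\langle BC_u,BC_v\rangle$) yield terms of the form $p_u^\top\mathrm{diag}(Wx_u)\,\mathrm{diag}(\phi_i^2)\,B^\top B\,\mathrm{diag}(\phi_j^2)\,\mathrm{diag}(Wx_v)\,p_v$; because $B^\top B$ is not diagonal, the sign matrix does not cancel and the term does not factor into feature bilinears times $e_u^\top\psi(L)e_v$. Your reduction goes through cleanly only for the factored Type-4 product $\langle BA_u,BA_v\rangle\langle C_u,C_v\rangle$, where eigenvector rows meet only through diagonal matrices. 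Combined with the unresolved top-$d$ problem (when $\lambda_d=\lambda_{d+1}$ the truncation is not even well defined, and for $d<n$ the cutoff indicator is spectrum-dependent, so fixed-$\psi$ functional calculus does not directly apply), your argument currently establishes the 3-WL bound only for a restricted subfamily of GraphMinNet. To close it you would need to treat each inner-product type and aggregation case by case under a spectral-gap or padding convention, or else retreat to the paper's strategy of invoking the general basis-invariance result of \cite{zhang2024expressive}.
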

\begin{proof}
First, we show GraphMinNet is more powerful than 1-WL. Let $B = I$, $d=l$, and $\phi(\Lambda_d) = \Lambda_d^{1/2}$. Then:
\begin{align*}
\langle a_u, a_v\rangle &= \langle A_u \odot C_u, A_u \odot C_u\rangle \\
&= \langle C_u, C_v\rangle \langle A_u, A_v\rangle = A_{u,v} \langle A_u, A_v\rangle
\end{align*}

The second equality follows from Type 2 definition of matrix inner product (see Eq. (\ref{eq_reformulation_inner-product})). Since $A_{u,v}$ is an adjacency matrix element and $\langle A_u, A_v\rangle$ is a function of node features $x_u$ and $x_v$, GraphMinNet can implement standard message passing. As message passing is equivalent to the 1-WL test \cite{xu2018powerful}, GraphMinNet is at least as powerful as 1-WL. The additional structural components make it strictly more powerful.

For the upper bound, note that GraphMinNet is an eigenspace projection GNN using a basis invariant function of positional encoding. By \cite{zhang2024expressive}, such architectures cannot exceed the power of 3-WL test.
\end{proof}
\begin{table*}[t]
\caption{\label{tab:dataset}Dataset statistics used in the experiments.}
\resizebox{\linewidth}{!}{
\begin{tabular}{lccccc}
\toprule
{Dataset}         & {\# Graphs} & {Avg. \# nodes} & {Avg. \# edges} & {Prediction level} & {Prediction task}         \\ 
\midrule
ZINC     & 12,000   & 23.2         & 24.9         & graph            & regression              \\
ogbg-molhiv     & 41,127   & 25.5         & 27.5         & graph            & binary classification   \\
\midrule
MNIST           & 70,000   & 70.6         & 564.5        & graph            & 10-class classification \\
CIFAR10         & 60,000   & 117.6        & 941.1        & graph            & 10-class classification \\
PATTERN         & 14,000   & 118.9        & 3,039.3      & node             & binary classification   \\
CLUSTER         & 12,000   & 117.2        & 2,150.9      & node             & 6-class classification  \\
\midrule
MalNet-Tiny    & 5,000   &  1,410.3        & 2,859.9      & graph             & 5-class classification \\ 
\midrule
Peptides-func   & 15,535   & 150.9        & 307.3        & graph            & 10-class classification \\
Peptides-struct & 15,535   & 150.9        & 307.3        & graph            & regression              \\
PascalVOC-SP    & 11,355   & 479.4        & 2,710.5      & node             & 21-class classification \\ 
\bottomrule
\end{tabular}
}
\end{table*}
\section{Datasets}

\label{appendix:dataset}
We utilized 10 datasets, which are widely adopted in the graph machine learning community. These datasets cover a range of tasks, including graph-level regression, binary classification, and node-level classification. All datasets utilized in our study are equipped with predefined training, validation, and test splits, ensuring consistency across experiments. In line with established practices in the field, we report the test results based on the best-performing models on the validation set. To ensure the robustness of our findings, we conduct evaluations over five distinct random seeds for each dataset. This approach aligns with methodologies outlined in prior studies \citep{rampavsek2022recipe, ma2023graph, shirzad2023exphormer,huang2024can}.
\paragraph{ZINC:}The ZINC-12k dataset \citep{dwivedi2023benchmarking} is a subset of the ZINC database, which contains commercially available chemical compounds. This dataset comprises 12,000 molecular graphs, where each graph represents a small molecule with the number of atoms ranging from 9 to 37. In this representation, nodes correspond to heavy atoms (with 28 distinct atom types), and edges symbolize chemical bonds (of 3 different types). The primary task associated with ZINC-12k is graph-level regression.
\paragraph{ogbg-molhiv:} The ogbg-molhiv dataset \citep{hu2020open} is adopted from the MoleculeNet collection \citep{wu2018moleculenet} by the Open Graph Benchmark (OGB) project. It consists of molecular graphs where nodes and edges are featurized to represent various chemophysical properties. The task for this dataset is a binary graph-level classification, aiming to predict whether a molecule can inhibit HIV replication.
\paragraph{MNIST \& CIFAR10:} The MNIST and CIFAR10 datasets \citep{dwivedi2023benchmarking} are derived from well-known image classification datasets. In these graph versions, each image is converted into a graph by constructing an 8-nearest-neighbor graph of SLIC superpixels \citep{achanta2012slic} for each image. The task for these datasets is a 10-class graph-level classification, mirroring the original image classification tasks. 
\paragraph{PATTERN and CLUSTER:} The PATTERN and CLUSTER datasets \citep{dwivedi2023benchmarking} are synthetic datasets that model community structures using the Stochastic Block Model (SBM). Both tasks involve inductive node-level classification. In the PATTERN dataset, the goal is to identify nodes that belong to one of 100 randomly generated subgraph patterns, which are distinct from the rest of the graph in terms of SBM parameters. For the CLUSTER dataset, each graph is composed of 6 clusters generated by the SBM, and the task is to predict the cluster ID of 6 test nodes, each representing a unique cluster within the graph.
\paragraph{MalNet-Tiny:} The MalNet-Tiny dataset \citep{freitas2020large} is a subset of the larger MalNet repository, which contains function call graphs extracted from Android APKs. This subset includes 5,000 graphs, each with up to 5,000 nodes, representing either benign software or four categories of malware. In the MalNet-Tiny subset, all original node and edge attributes have been removed, and the classification task is based solely on the graph structure.
\paragraph{Peptides-func and Peptides-struct:} The Peptides-func and Peptides-struct datasets \citep{dwivedi2022long} are derived from 15,000 peptides retrieved from the SATPdb database \citep{singh2016satpdb}. Both datasets use the same set of graphs but focus on different prediction tasks. Peptides-func is a graph-level classification task with 10 functional labels associated with peptide functions. In contrast, Peptides-struct is a graph-level regression task aimed at predicting 11 structural properties of the molecules. 
\paragraph{PascalVOC-SP:} The PascalVOC-SP dataset \citep{dwivedi2022long} is a node classification dataset based on the Pascal VOC 2011 image dataset \citep{everingham2010pascal}. Superpixel nodes are extracted using the SLIC algorithm \citep{achanta2012slic}, and a rag-boundary graph that interconnects these nodes is constructed. The task is to classify each node into corresponding object classes, akin to semantic segmentation. 
\begin{table*}[t]
\caption{\label{tab:hyp}Hyperparameters used to achieve our result.}
\resizebox{\linewidth}{!}{
\begin{tabular}{lcccccccc}
\toprule
{Dataset}         & Hidden dimension $(l)$ & {Local Method} & {\# Layers} & {Lap. dim} & {Batch} & {LR}   & {Weight Decay} & Dropouts\\ 
\midrule
ZINC&$64$&GINE&$10$&$16$&$32$&$0.001$&$1e^{-5}$&$\{0.1,0.0,0.6,0.0\}$ \\
ogbg-molhiv &$64$&GatedGCN&$6$&$16$&$128$&$0.002$&$0.001$&$\{0.1,0.3,0.1,0.1\}$\\
MNIST&$52$&GatedGCN&$3$&$32$&$16$&$0.005$&$0.01$&$\{0.1,0.1,0.1,0.4\}$\\
CIFAR10 &$52$&GatedGCN&$3$&$32$&$16$&$0.005$&$0.01$&$\{0.1,0.1,0.1,0.0\}$\\
PATTERN &$36$&GatedGCN&$24$&$32$&$32$&$0.001$&$0.1$&$\{0.1,0.1,0.5,0.0\}$\\
CLUSTER&$36$&GatedGCN&$24$&$32$&$16$&$0.001$&$0.1$&$\{0.1,0.3,0.3,0.0\}$ \\
MalNet-Tiny&$64$&GatedGCN&$5$&$0$&16&0.0015&$1e^{-5}$&$\{0.1,0.1,0.35,0.05\}$ \\ 
Peptides-func&$100$&GatedGCN&$3$&$31$&$256$&$0.003$&$0.1$&$\{0.0,0.1,0.1,0.3\}$ \\
Peptides-struct&$100$&GatedGCN&$3$&$31$&$128$&$0.003$&$0.1$&$\{0.0,0.1,0.1,0.5\}$ \\
PascalVOC-SP&$96$&GatedGCN&$4$&$63$&$16$&$0.002$&$0.1$&$\{0.0,0.0,0.5,0.0\}$ \\ 
\bottomrule
\end{tabular}
}
\end{table*}
\section{Hyperparameters}
\label{sec:hyerparameters}
In this section, we summarize the key hyperparameters used to achieve the results presented in Table~\ref{tab:main_result}. These parameters, detailed in Table~\ref{tab:hyp}, include the hidden dimension ($l$), the type of local method (such as GINE or GatedGCN), the number of layers, the dimensionality of the Laplacian features (eigenvalues and eigenvectors), batch size, learning rate, weight decay, and dropout rates. The dropout rates are specified for different components of the model: the feedforward network, the local method, the residual, and our proposed GraphMinNet. For MalNet-Tiny and PascalVOC-SP datasets, we use the 0-th order power function $\phi_i$ in $A_u$. Additionally, for MalNet-Tiny, we set $\phi_i$ in $C_u$ to be the zero function and define the $\oplus_1$ operation as elementwise addition. For all the other datasets, we used a generalized permutation equivariant set aggregation function~\cite{wang2022equivariant}.


\end{document}